\documentclass[journal]{IEEEtran}

\usepackage{algorithm}
\usepackage{algorithmic}

\usepackage{times,amsmath,amsfonts,amssymb,subfigure,xspace,lscape,epstopdf,amsthm,bm,,mathtools,cuted} 
\usepackage{mdframed,multirow,hyperref,booktabs}
\usepackage{multicol}
\usepackage[all]{xy}
\usepackage{xcolor}

\newif\ifpdf
\ifx\pdfoutput\undefined
   \pdffalse
\else
   \pdfoutput=1
   \pdftrue
\fi
\ifpdf
   \usepackage{graphicx}
   \usepackage{epstopdf}
   \DeclareGraphicsRule{}{pdf}{.pdf}{`epstopdf #1}
   \pdfcompresslevel=9
\else
   \usepackage{graphicx}
\fi


\usepackage{color}

\DeclareMathOperator*{\argmax}{arg\,max}
\DeclareMathOperator*{\argmin}{arg\,min}

\usepackage{changepage}
\usepackage{hyperref}

\theoremstyle{plain}
\newtheorem{thm}{Theorem}
\newtheorem{lemma}[thm]{Lemma}

\newtheorem{corollary}{Corollary}

\theoremstyle{definition}
\newtheorem*{defn*}{Definition}

\theoremstyle{remark}

\newtheorem*{note}{Note}

\begin{document}

\title{Learning to be Global Optimizer}

\author{Haotian Zhang, Jianyong Sun and Zongben Xu~\thanks{HZ, JS and ZX are all with the School of Mathematics and Statistics and National Engineering Laboratory for Big Data Analytics, Xi'an Jiaotong University, Xi'an, China. {\em Corresponding author: Jianyong Sun, email: jy.sun@xjtu.edu.cn}}}


\maketitle

\begin{abstract}
The advancement of artificial intelligence has cast a new light on the development of optimization algorithm. This paper proposes to {\em learn} a two-phase (including a minimization phase and an escaping phase) global optimization algorithm for smooth non-convex functions. For the minimization phase, a model-driven deep learning method is developed to learn the update rule of descent direction, which is formalized as a nonlinear combination of historical information, for convex functions. We prove that the resultant algorithm with the proposed adaptive direction guarantees convergence for convex functions. Empirical study shows that the learned algorithm significantly outperforms some well-known classical optimization algorithms, such as gradient descent, conjugate descent and BFGS, and performs well on ill-posed functions. The escaping phase from local optimum is modeled as a Markov decision process with a fixed escaping policy. We further propose to learn an optimal escaping policy by reinforcement learning. The effectiveness of the escaping policies is verified by optimizing synthesized functions and training a deep neural network for CIFAR image classification. The learned two-phase global optimization algorithm demonstrates a promising global search capability on some benchmark functions and machine learning tasks.
\end{abstract}

\begin{IEEEkeywords}two-phase global optimization, learning to learn, model-driven deep learning, reinforcement learning, Markov Decision Process\end{IEEEkeywords}

\section{Introduction}\label{introduction}

This paper considers unconstrained continuous global optimization problem:
\begin{equation}
\min\limits_{\mathbf{x}\in\mathbb{R}^n} f(\mathbf{x})\label{1}
\end{equation}where $f$ is smooth and non-convex. The study of continuous global optimization can be dated back to 1950s~\cite{dixon75}. The outcomes are very fruitful, please see~\cite{horst} for a basic reference on most aspects of global optimization,~\cite{globalopt_www} for a comprehensive archive of online information, and~\cite{pinter08} for practical applications.

Numerical methods for global optimization can be classified into four categories according to their available guarantees, namely, incomplete, asymptotically complete, complete, and rigorous methods~\cite{Neumaier04}.  We make no attempt on referencing or reviewing the large amount of literatures. Interested readers please refer to a WWW survey by Hart~\cite{gray97} and Neumaier~\cite{globalopt_www}. Instead, this paper focuses on a sub-category of incomplete method, the two-phase approach~\cite{Levy1985The,Ge1987A}.


A two-phase optimization approach is composed of a sequence of cycles, each cycle consists of two phases, a minimization phase and an escaping phase. At the minimization phase, a minimization algorithm is used to find a local minimum for a given starting point. The escaping phase aims to obtain a good starting point for the next minimization phase so that the point is able to escape from the local minimum.

\subsection{The Minimization Phase}

Classical line search iterative optimization algorithms, such as gradient descent, conjugate gradient descent, Newton method, and quasi-Newton methods like DFP and BFGS, etc., have flourished decades since 1940s~\cite{boyd2004convex,Fletcher1964Function}. These algorithms can be readily used in the minimization phase.

At each iteration, these algorithms usually take the following location update formula:
\begin{equation}x_{k+1} = x_k + \Delta_k \end{equation}where $k$ is the iteration index, $x_{k+1}, x_{k}$ are the iterates, $\Delta_k$ is often taken as $ \alpha_k \cdot d_k$ where $\alpha_k$ is the step size and $d_k$ is the descent direction. It is the chosen of $d_k$ that largely determines the performance of these algorithms in terms of convergence guarantees and rates.

In these algorithms, $d_k$ is updated by using first-order or second-order derivatives. For examples, $d_k = -\nabla f(x_k)$ in gradient descent (GD), and $-[\nabla^2 f(x_k) ]^{-1}\nabla f(x_k)$ in Newton method where $\nabla^2 f(x_k)$ is the Hessian matrix. These algorithms were usually with mathematical guarantee on their convergence for convex functions.  Further, it has been proven that first-order methods such as gradient descent usually converges slowly (with linear convergence rate), while second-order methods such as conjugate gradient and quasi-Newton can be faster (with super linear convergence rate), but their numerical performances could be poor in some cases (e.g. quadratic programming with ill-conditioned Hessian due to poorly chosen initial points).


For a specific optimization problem, it is usually hard to tell which of these algorithms is more appropriate. Further, the no-free-lunch theorem~\cite{wolpert2002no} states that ``for any algorithm, any elevated performance over one class of problems is offset by performance over another class". In light of this theorem, efforts have been made on developing optimization algorithms with adaptive descent directions.

The study of combination of various descent directions can be found way back to 1960s. For examples, the Broyden family~\cite{Sun2006Optimization} uses a linear combination of DFP and BFGS updates for the approximation to the inverse Hessian. In the Levenberg-Marquardt (LM) algorithm~\cite{marquardt1963algorithm} for nonlinear least square problem, a linear combination of the Hessian and identity matrix with non-negative damping factor is employed to avoid slow convergence in the direction of small gradients. In the accelerated gradient method and recently proposed stochastic optimization algorithms, such as momentum~\cite{Polyak1964Some}, AdaGrad~\cite{Duchi2011Adaptive}, AdaDelta~\cite{Zeiler2012ADADELTA}, ADAM~\cite{Kingma2014Adam} and such, moments of the first-order and second-order gradients are combined and estimated iteratively to obtain the location update.

Besides these work, only recently the location update $\Delta_k$ is proposed to be adaptively {\em learned} by considering it as a parameterized function of appropriate historical information: \begin{equation} \Delta_k = g(S_{k};\theta_k)\end{equation}where $S_{k}$ represents the information gathered up to $k$ iterations, including such as iterates, gradients, function criteria, Hessians and so on, and $\theta_k$ is the parameter.

Neural networks are used to model $g(S_{k}; \theta_k)$ in recent literature simply because they are capable of approximating any smooth function. For example, Andrychowicz et al.~\cite{Andrychowicz2016Learning} proposed to model $d_k$ by long short term memory (LSTM) neural network~\cite{Hochreiter1997Long} for differentiable $f$, in which the input of LSTM includes $\nabla f(x_k)$ and the hidden states of LSTM. Li et al.~\cite{li2016learning} used neural networks to model the location update for some machine learning tasks such as logistic/linear regression and neural net classifier. Chen et al.~\cite{chen2016learning} proposed to obtain the iterate directly for black-box optimization problems, where the iterate is obtained by LSTM which take previous queries and function evaluations, and hidden states as inputs. 


Neural networks used in existing learning to learn approaches are simply used as a block box. The interpretability issue of deep learning is thus inherited. A model-driven method with prior knowledge from hand-crafted classical optimization algorithms is thus much appealing. Model driven deep learning~\cite{Xu2018Model,sun2016deep} has shown its ability on learning hyper-parameters for a compressed sensing problem of the MRI image analysis, and for stochastic gradient descent methods~\cite{wang2018hyperadam,lv2017learning}.

\subsection{The Escaping Phase}

A few methods, including tunneling~\cite{levy85} and filled function~\cite{Ge1987A}, have been proposed to escape from local optimum. The tunneling method was first proposed by Levy and Montalvo~\cite{levy85}. The core idea is to use the zero of an auxiliary function, called tunneling function, as the new starting point for next minimization phase. The filled function method was first proposed by Ge and Qin~\cite{Ge1987A}. The method aims to find a point which falls into the attraction basin of a better than current local minimizer by minimizing an auxiliary function, called the filled function. The tunneling and filled function methods are all based on the construction of auxiliary function, and the auxiliary functions are all built upon the local minimum obtained from previous minimization phase. They are all originally proposed for smooth global optimization.

Existing research on tunneling and filled function is either on developing better auxiliary functions or extending to constrained and non-smooth optimization problems~\cite{ytxu15,lin11,zhang09}. In general, these methods have similar drawbacks. First, the finding of zero or optimizer of the auxiliary function is itself a hard optimization problem. Second, it is not always guaranteed to find a better starting point when minimizing the auxiliary function~\cite{zhang2004new}. Third, there often exists some hyper-parameters which are critical to the methods' escaping performances, but are difficult to control~\cite{Lan2010A}. Fourth, some proposed auxiliary functions are built with exponent or logarithm term. This could cause ill-condition problem for the minimization phase~\cite{zhang2004new}.  Last but not least, it has been found that though the filled and tunneling function methods have desired theoretical properties, their numerical performance is far from satisfactory~\cite{zhang2004new}.

\subsection{Main Contributions}

In this paper, we first propose a model-driven learning approach to learn adaptive descent directions for locally convex functions. A local-convergence guaranteed algorithm is then developed based on the learned directions. We further model the escaping phase within the filled function method as a Markov decision process (MDP) and propose two policies, namely a fixed policy and a policy learned by policy gradient, on deciding the new starting point. Combining the learned local algorithm and the escaping policy, a two-phase global optimization algorithm is finally formed.

We prove that the learned local search algorithm is convergent; and we explain the insight of the fixed policy which can has a higher probability to find promising starting points than random sampling. Extensive experiments are carried out to justify the effectiveness of the learned local search algorithm, the two policies and the learned two-phase global optimization algorithm.

The rest of the paper is organized as follows. Section~\ref{rl} briefly discusses the reinforcement learning and policy gradient to be used in the escaping phase. Section~\ref{l2local} presents the model-driven learning to learn approach for convex optimization. The escaping phase  is presented in Section~\ref{escape}, in which the fixed escaping policy under the MDP framework is presented in Section~\ref{escaping_MDP}, while the details of the learned policy is presented in Section~\ref{mechanism_RL}. Controlled experimental study is presented in Section~\ref{experiments}. Section~\ref{discussion} concludes the paper and discusses future work.

\section{Brief Introduction of Reinforcement Learning}\label{rl}




In reinforcement learning (RL), the learner (agent) chooses to take an action at each time step; the action changes the state of environment; (possibly delayed) feedback (reward) returns as the response of the environment to the learner's action and affects the learner's next decision. The learner aims to find an optimal policy so that the actions decided by the policy maximize cumulative rewards along time.
\begin{figure}
\centering\includegraphics[width=\columnwidth]{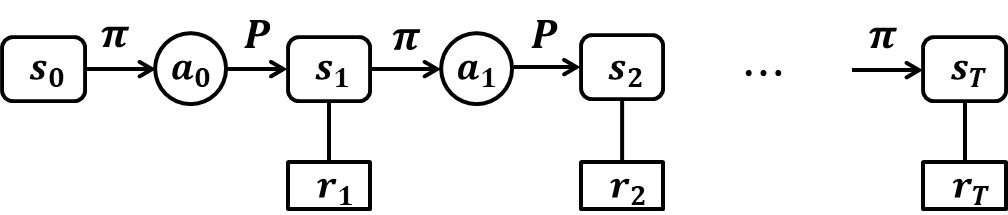}\centering
\caption{Illustration of a finite horizon Markov decision process. }\label{RL_introduction}
\end{figure}

Consider a finite-horizon MDP with continuous state and action space defined by the tuple $(\mathcal{S},\mathcal{A},\mu_0,p,r,\pi,T)$ where $\mathcal{S}\in\mathbb{R}^D$ denotes the state space, $\mathcal{A}\in\mathbb{R}^d$ the action space, $\mu_0$ the initial distribution of the state, $r:\mathcal{S}\rightarrow\mathbb{R}$ the reward, and $T$ the time horizon, respectively. At each time $t$, there are $s_t\in\mathcal{S}$, $a_t\in\mathcal{A}$ and a transition probability $p:\mathcal{S}\times\mathcal{A}\times\mathcal{S}\rightarrow \mathbb{R}$ where $p(s_{t+1}|a_t,s_t)$ denotes the transition probability of $s_{t+1}$ conditionally based on $s_t$ and $a_t$. The policy $\pi:\mathcal{S}\times\mathcal{A}\times \{0,1,\cdots\,T\}\rightarrow\mathbb{R}$, where $\pi(a_t|s_t;\theta)$ is the probability  of choosing action $a_t$ when observing current state $s_t$ with $\theta$ as the parameter.


As shown in Fig.~\ref{RL_introduction}, starting from a state $s_0\sim \mu_0$, the agent chooses $a_0\sim \pi(a_0|s_0,\theta)$; after executing the action, agent arrives at state $s_1\sim p(s_{1}|a_0,s_0)$. Meanwhile, agent receives a reward $r(s_1)$ (or $r_1$) from the environment. Iteratively, a trajectory $\tau=\{s_0,a_0,r_1, s_1,a_1,r_2, \cdots,a_{T-1},s_T, r_T\}$ can be obtained. The optimal policy $\pi^*$ is to be found by maximizing the expectation of the cumulative reward $R(\tau) = \left[\sum_{t=0}^{T-1} \gamma^t r(s_{t+1})\right]$: \begin{equation}
\pi^*=\arg\max\limits_{\pi}\mathbb{E}_{\tau} \left[ R_{\tau} \right] = \sum_{\tau} q(\tau;\theta) R(\tau)\triangleq U(\theta) \label{RL_aim}
\end{equation}where the expectation is taken over trajectory $\tau\sim q(\tau;\theta)$ where
\begin{equation}
q(\tau;\theta)=\mu_0(s_0)\prod_{t=0}^{T-1}\pi(a_t|s_t,\theta)p(s_{t+1}|a_t,s_t).\label{tau_density}
\end{equation}

A variety of reinforcement learning algorithms have been proposed for different scenarios of the state and action spaces, please see~\cite{arulkumaran2017deep} for recent advancements. The RL algorithms have succeeded overwhelmingly for playing games such as GO~\cite{silver2016mastering}, Atari~\cite{mnih2013playing} and many others.

We briefly introduce the policy gradient method for continuous state space~\cite{Sutton1998Reinforcement}, which will be used in our study. Taking derivative of $U(\theta)$ w.r.t. $\theta$ discarding unrelated terms, we have
\begin{eqnarray}\label{policy_gradient}
\nabla U(\theta) &=&  \sum_{\tau}R(\tau) \nabla_{\theta} q(\tau;\theta) \nonumber\\
&= & \sum_{\tau}R(\tau) \nabla_{\theta} \log q(\tau;\theta) q(\tau;\theta) \nonumber\\
&=& \sum_{\tau} R(\tau) q(\tau;\theta)  \left[\sum_{t=0}^{T-1}\nabla_{\theta} \log \pi(a_t|s_t,\theta)\right] \label{policy_gradient}
\end{eqnarray}Eq.~\ref{policy_gradient} can be calculated by sampling trajectories $\tau_1, \cdots, \tau_N$ in practice:
\begin{eqnarray}
\nabla U(\theta)\approx\frac{1}{N}\sum_{i=1}^{N}\sum_{t=0}^{T-1}\nabla_{\theta} \log \pi(a_t^{(i)}|s_t^{(i)},\theta) R(\tau^i)\label{policy_gradient_sampling}
\end{eqnarray}where $a_t^{(i)} (s_t^{(i)})$ denotes action (state) at time $t$ in the $i$th trajectory, $R(\tau^i)$ is the cumulative reward of the $i$th trajectory.

For continuous state and action space, normally assume
\begin{eqnarray}
\pi(a|s,\theta)=\frac{1}{\sqrt{2\pi\sigma}}\exp\left\{-\frac{(a-\phi(s;\theta))^2}{\sigma^2}\right\}\label{policy_form}
\end{eqnarray}where $\phi$ can be any smooth function, like radial basis function, linear function, and even neural networks. 

\section{Model-driven Learning to Learn for Local Search}\label{l2local}


In this section, we first summarize some well-known first- and second-order classical optimization algorithms. Then the proposed model-driven learning to optimize method for locally convex functions is presented.

\subsection{Classical Optimization Methods}

In the sequel, denote ${g_k}=\nabla f(x_k)$, ${s_k}= x_{k+1}-x_{k}$, ${y_k}={g_{k+1}}-{g_{k}}$. The descent direction $d_k$ at the $k$-th iteration of some classical methods is of the following form~\cite{Sun2006Optimization}:
\begin{equation}
d_k= \left\{
 \begin{array}{ll}
 -g_k					& \text{steepest GD} \\
 -g_k+ \alpha_k d_{k-1}	& \text{conjugate GD} \\
 -H_{k}g_k				& \text{quasi-Newton}
\end{array}
 \right.
\end{equation}where $H_k$ is an approximation to the inverse of the Hessian matrix, and $\alpha_k$ is a coefficient that varies for different conjugate GDs. For example, $\alpha_k$ could take $g_k^\intercal y_{k-1} / d_{k-1}^\intercal y_{k-1}$ for Crowder-Wolfe conjugate gradient method~\cite{Sun2006Optimization}.

The update of $H_k$ also varies for different quasi-Newton methods. In the Huang family, $H_{k}$ is updated as follows:
\begin{equation} \label{hupdate}
H_{k}=H_{k-1}+s_{k-1} u_{k-1}^\intercal+ H_{k-1}y_{k-1} v_{k-1}^\intercal
\end{equation}where
\begin{eqnarray}
u_{k-1}&=&a_{11}s_{k-1}+a_{12}H_{k-1}^\intercal y_{k-1}\\
v_{k-1}&=&a_{21}s_{k-1}+a_{22}H_{k-1}^\intercal y_{k-1}\\
u_{k-1}^\intercal y_{k-1} &=& \rho, v_{k-1}^\intercal y_{k-1}=-1
\end{eqnarray}The Broyden family is a special case of the Huang family in case $\rho=1$, and $a_{12} = a_{21} $.



\subsection{Learning the descent direction: d-Net}

We propose to consider the descent direction $d_k$ as a nonlinear function of $S_k = \left\{g_k, g_{k-1}, s_{k-1}, s_{k-2}, y_{k-2}\right\}$ with parameter $\theta_k = \{ w_k^1,w_k^2,w_k^3,w_k^4, \beta_k \}$ for the adaptive computation of descent search direction ${d_k}=h(S_k; \theta_k)$. Denote
{\begin{eqnarray}R_{k-1}&=&\mathbf{I}-\frac{{s_{k-1}}({w_k^1{g_k}-w_k^2{g_{k-1}}})^\intercal}{{s_{k-1}^\intercal}({w_k^3{g_k}-w_k^4{g_{k-1}}})}.\label{rk}\end{eqnarray}}We propose
{\begin{equation} h(S_k; \theta_k)=-R_{k-1}\Big({\beta}_k{H_{k-1}}+(1-{\beta}_k)\mathbf{I}\Big){g_k}\label{hgd}
\end{equation}}where $\mathbf{I}$ is the identity matrix.

At each iteration, rather than updating $H_{k-1}$ directly, we update the multiplication of $R_{k-1}$ and $H_{k-1}$ like in the Huang family ~\cite{Sun2006Optimization}:
\begin{eqnarray}\label{rhupdate}
R_{k-1}H_{k-1}=R_{k-1}R_{k-2}H_{k-2}+\rho R_{k-1}\frac{{s_{k-2}}{s_{k-2}}^\intercal}{{s_{k-2}}^\intercal{y_{k-2}}}.
\end{eqnarray}

It can be seen that with different parameter $w_k^i, i = 1,\cdots,4$ and $\beta_k$ settings, $d_k$ can degenerate to different directions:
\begin{itemize}
\item when ${w_k^1,w_k^2,w_k^3,w_k^4\in \{0,1\}}$, the denominator of $R_k$ is not zero, and ${\beta_k=0}$, the update degenerates to conjugate gradient.
\item when ${w_k^1,w_k^2,w_k^3,w_k^4\in \{0,1\}}$, and the denominator of $R_k$ is not zero, and ${\beta_k=1}$, the update becomes the preconditioned conjugate gradient.
\item when ${w_k^1=1},{w_k^2=1},{w_k^3=1},{w_k^4=1}$, and ${\beta_k=1}$, the update degenerates to the Huang family.
\item when ${w_k^1=0},{w_k^2=0}$, the denominator of $R_k$ is not zero, and ${\beta_k=0}$ the update becomes the steepest GD.
\end{itemize}

Based on Eq.~\ref{hgd}, a new optimization algorithm, called adaptive gradient descent algorithm (AGD), can be established. It is summarized in Alg.~\ref{alg:HGD}. It is seen that to obtain a new direction by Eq.~\ref{hgd}, information from two steps ahead is required as included in $S_k$. To initiate the computation of new direction, in Alg.~\ref{alg:HGD}, first a steep gradient descent step (lines~\ref{sg1}-\ref{sg3}) and then a non-linear descent step (lines~\ref{sg4}-\ref{sg7}) are applied. With these prepared information, AGD iterates (lines~\ref{agd1}-\ref{agd2}) until the norm of gradient at the solution $x_k$ is less than a positive number $\epsilon$.
\textcolor{red}{
\begin{algorithm}\caption{The adaptive gradient descent algorithm (AGD)}
\label{alg:HGD}
\begin{algorithmic}[1]
\STATE initialize $x_0$, ${H_0}\leftarrow I$ and $\epsilon>0$;\
\STATE \textit{\# a steep gradient descent step}
\STATE $g_0\leftarrow \nabla f(x_0), d_0\leftarrow g_0$ \label{sg1}
\STATE Choose ${\alpha}_0$ through line search;\  \label{sg2}
\STATE $x_1\leftarrow x_0-{{\alpha}_0}g_0$; \label{sg3}
\STATE \textit{\# a non-linear descent step}
\STATE $g_1\leftarrow \nabla f(x_1)$, $s_{0}\leftarrow x_{1}-x_{0}$; \label{sg4}
\STATE Compute $R_0 \leftarrow \mathbf{I}-\frac{{s_{0}}({w_1^1{g_1}-w_1^2{g_{0}}})^\intercal}{{s_{0}^\intercal}({w_1^3{g_1}-w_1^4{g_{0}}})} $ and $d_1\leftarrow R_0H_0g_1$;\label{sg5}
\STATE Choose ${\alpha}_1$ through line search;\ \label{sg6}
\STATE $x_2\leftarrow x_1-{{\alpha}_1}d_1$;\label{sg7}
\STATE Set $k\leftarrow 2$;
\REPEAT
\STATE Compute $g_k\leftarrow \nabla f(x_k)$, $s_{k-1}\leftarrow x_{k}-x_{k-1}$, $y_{k-2}\leftarrow g_{k-1}-g_{k-2}$;\;
\STATE Gather $S_k\leftarrow \left\{g_k, g_{k-1}, s_{k-1}, s_{k-2}, y_{k-2}\right\}$;\; \label{agd1}
\STATE Compute ${d_k}\leftarrow h(S_k; \theta_k)$;\;
\STATE Choose ${\alpha}_k$ through line search;\
\STATE Update $R_{k}H_{k}$;
\STATE  $x_{k+1}\leftarrow x_k-{{\alpha}_k}d_k$;\ \label{agd2}
\STATE  $k\leftarrow k+1$;\
\UNTIL{$\|g_k\|\leq \epsilon$}
\end{algorithmic}
\end{algorithm}}

To specify the parameters $\theta_k$ in the direction update function $h$, like~\cite{Andrychowicz2016Learning}, we unfold AGD into $T$ iterations. Each iteration can be considered as a layer in a neural network. We thus have a  `deep' neural network with $T$ layers. The resultant network is called {\em d-Net}. Fig.~\ref{dnet} shows the unfolding.

Like normal neural networks, we need to train for its parameters $\theta = \{\theta_1, \cdots, \theta_T\}$. To learn the parameters, the loss function $\ell({\theta})$ is defined as
\begin{equation}\label{loss}
\begin{split}
L({\theta})&=\mathbb{E}_{f\in {\cal F}}\left( \sum_{t=1}^T{f(x_t)}\right) \\
		&=\mathbb{E}_{f\in {\cal F}}\left(\sum_{t=2}^T{f\Big(x_{t-1}+{\alpha_t}h(S_{t-1};\theta_{t-1})\Big)}\right)
\end{split}
\end{equation}That is, we expect these parameters are optimal not only to a single function, but to a class of functions $\cal F$; and to all the criteria along the $T$ iterations.

We hereby choose $\cal F$ to be the Gaussian function family:
\begin{equation} {\cal F}_{\cal G}  = \left\{ f \big| f(x)=\exp\left(-x^\intercal \Sigma^{-1} x\right), \Sigma \succeq 0 , x \in \mathbb{R}^n \right\} \end{equation}There are two reasons to choose the Gaussian function family. First, any $f \in {\cal F}_{\cal G}$ is locally convex. That is, let $H(x)$ represents the Hessian matrix of $f(x)$, it is seen that
\begin{equation*}
\lim\limits_{x\rightarrow{0}}{H(x)=f(x){\Sigma}^{-1}+{\Sigma}^{-1}xx^\intercal{f(x)}={\Sigma}^{-1}}\succeq 0
\end{equation*}Second, it is known that finite mixture Gaussian model can approximate a Riemann integrable function with arbitrary accuracy~\cite{Wilson2000Multiresolution}. Therefore, to learn an optimization algorithm that guarantees convergence to local optima, it is sufficient to choose functions that are locally convex.

 Given ${\cal F}$, when optimizing $\ell(\theta)$, the expectation can be obtained by Monte Carlo approximation with a set of functions sampled from ${\cal F}$, that is
 \[\ell(\theta) \approx \frac{1}{N} \sum_{i = 1}^N \sum_{t=2}^T f_i\Big(x_{t-1}+{\alpha_t}h(S_{t-1};\theta_{t-1})\Big)\]where $f_i \sim {\cal F}$. $\ell(\theta)$ can then be optimized by the steepest GD algorithm.

\textbf{Note:} The contribution of the proposed d-Net can be summarized as follows. First, there is a significant difference between the proposed learning to learn approach with existing methods, such as~\cite{Andrychowicz2016Learning,chen2016learning}. In existing methods, LSTM is used as a `black-box' for the determination of descent direction, and the parameters of the used LSTM is shared among the time horizon. Whereas in our approach, the direction is a combination of known and well-studied directions, i.e. a `white-box', which means that our model is interpretable. This is a clear advantage against black-box models.

Second, in classical methods, such as the Broyden and Huang family and LM, descent directions are constructed through a linear combination. On the contrary, the proposed method is nonlinear and subsumes a wide range of classical methods. This may result in better directions.

Further, the combination parameters used in classical methods are considered to be hyper-parameters. They are normally set by trial and error. In the AGD, these parameters are learned from the optimization experiences to a class of functions, so that the directions can adapt to new optimization problem.

\begin{figure}
\centering\includegraphics[width=\columnwidth]{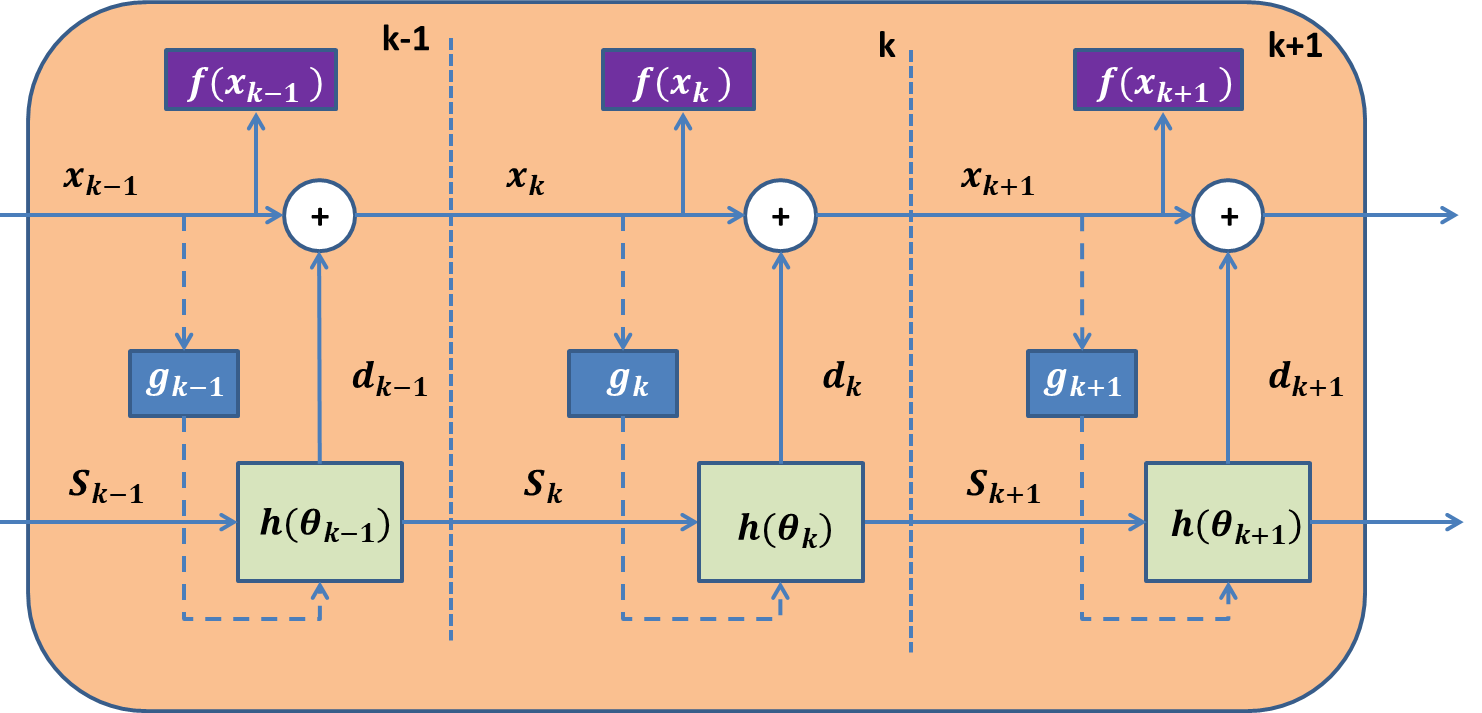}\centering
\caption{The unfolding of the AGD. }\label{dnet}
\end{figure}

\subsection{Group d-Net}

To further improve the search ability of d-Net, we employ a group of d-Nets, dubbed as Gd-Net. These d-Nets are connected sequentially, with shared parameters among them. Input of the $k$-th ($k > 1$) d-Net is the gradient from $(k-1)$-th d-Net. To apply Gd-Net, an initial point is taken as the input, and is brought forward through these d-Nets until the absolute gradient norm is less than a predefined small positive real number.


In the following we show that Gd-Net guarantees convergence to optimum for convex functions. We first prove that AGD is convergent. Theorem~\ref{dNetConv} summarizes the result. Please see Appendix A for proof.

\begin{thm}\label{dNetConv}
Assume $f:\mathbb{R}^n \rightarrow \mathbb{R}$ is continuous and differentiable and the sublevel set
\[L(x)=\Big\{x\in \mathbb{R}^n | f(x)\leq f(x_0) \Big\} \text{\ for any\ } x_0 \in \mathbb{R}^n\] is bounded. The sequence $\{ x_k, k = 1, 2,\cdots \}$ obtained by AGD with exact line search converges to a stable point.
\end{thm}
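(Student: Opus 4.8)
The plan is to cast AGD as a classical line-search descent method and invoke a Zoutendijk-type argument. Writing the effective search direction as $p_k = -M_k g_k$ with $M_k := R_{k-1}\big(\beta_k H_{k-1} + (1-\beta_k)\mathbf{I}\big)$, the first task is to verify that each $p_k$ is a genuine descent direction, i.e. $g_k^\intercal p_k < 0$, equivalently $g_k^\intercal M_k g_k > 0$. Since $f$ is continuous with bounded sublevel set $L(x_0)$, it is bounded below there; exact line search guarantees $f(x_{k+1}) \le f(x_k)$, so the whole sequence $\{x_k\}$ stays in the compact set $L(x_0)$ and $\{f(x_k)\}$ is monotone decreasing and bounded below, hence convergent. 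This yields $f(x_k) - f(x_{k+1}) \to 0$, the basic energy decrease that Zoutendijk's inequality will convert into a statement about the gradients.

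Next I would record the Zoutendijk condition: for an exact (or Wolfe) line search on a function whose gradient is Lipschitz on $L(x_0)$ (which follows from continuous differentiability on a compact set), one has $\sum_k \cos^2\theta_k \,\|g_k\|^2 < \infty$, where $\cos\theta_k = -g_k^\intercal p_k / (\|g_k\|\,\|p_k\|)$. To turn this into $\|g_k\| \to 0$ it suffices to bound $\cos\theta_k$ away from zero uniformly in $k$. This reduces the theorem to a single quantitative claim: there exist constants $0 < m \le M < \infty$, independent of $k$, with $g_k^\intercal M_k g_k \ge m\|g_k\|^2$ and $\|M_k g_k\| \le M\|g_k\|$, whence $\cos\theta_k = g_k^\intercal M_k g_k / (\|g_k\|\,\|M_k g_k\|) \ge m/M > 0$.

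The main obstacle is establishing those uniform bounds, and this is where the structure of $M_k$ must be used carefully. The matrix $H_{k-1}$ is generated by the Huang-type recursion in Eq.~\ref{rhupdate}, so I would first show by induction that, along the iterates confined to $L(x_0)$, the $H_{k-1}$ remain symmetric positive definite with eigenvalues in a fixed interval $[\lambda_{\min},\lambda_{\max}]$; then $\beta_k H_{k-1} + (1-\beta_k)\mathbf{I}$ is positive definite with bounded condition number for every $\beta_k \in [0,1]$. The genuinely delicate factor is $R_{k-1} = \mathbf{I} - s_{k-1}(w_k^1 g_k - w_k^2 g_{k-1})^\intercal / \big(s_{k-1}^\intercal(w_k^3 g_k - w_k^4 g_{k-1})\big)$, a \emph{nonsymmetric} rank-one perturbation of the identity whose denominator must stay bounded away from zero for $\|R_{k-1}\|$ to be controlled. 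I expect this to be the crux: one must argue that the line-search and curvature relations (analogous to $s_{k-1}^\intercal y_{k-1} > 0$) keep this denominator uniformly positive and the numerator bounded, so that $\|R_{k-1}\|$ and $\|R_{k-1}^{-1}\|$ are uniformly bounded; combining this with the bounds on the symmetric factor then delivers the required $m$ and $M$.

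Finally, once $\|g_k\| \to 0$ is secured, compactness of $L(x_0)$ furnishes a subsequence $x_{k_j} \to x^*$, and continuity of $\nabla f$ gives $\nabla f(x^*) = 0$, so every accumulation point of $\{x_k\}$ is a stationary (``stable'') point. I would close by noting that boundedness of $L(x_0)$ confines all accumulation points to this set, which is the sense in which AGD converges to a stable point.
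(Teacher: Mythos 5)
Your overall architecture (descent direction $\Rightarrow$ Zoutendijk $\Rightarrow$ $\|g_k\|\to 0$ $\Rightarrow$ stationary accumulation points) is a reasonable classical template, but the proposal has a genuine gap exactly at the step you yourself flag as ``the crux.'' You reduce everything to uniform bounds $g_k^\intercal M_k g_k \ge m\|g_k\|^2$ and $\|M_k g_k\|\le M\|g_k\|$, which in turn require (i) uniform spectral bounds on the Huang-type matrices $H_{k-1}$ generated by Eq.~\ref{rhupdate} and (ii) the denominator $s_{k-1}^\intercal(w_k^3 g_k - w_k^4 g_{k-1})$ staying uniformly bounded away from zero. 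Neither is established, and neither is obviously true here: the weights $w_k^i$ are \emph{learned} parameters, not constrained by any curvature condition analogous to $s^\intercal y>0$, so there is no a priori reason the denominator cannot degenerate; likewise the recursion for $R_{k-1}H_{k-1}$ does not come with a positive-definiteness guarantee. As written, the proof does not go through.

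You also miss the one observation that makes the descent property cheap and that the paper's own argument is built on: with \emph{exact} line search, $g_k^\intercal s_{k-1}=0$, so the rank-one term in $R_{k-1}=\mathbf{I}-s_{k-1}(\cdot)^\intercal/(\cdot)$ is annihilated when you form $g_k^\intercal d_k$, giving
\[-g_k^\intercal d_k = g_k^\intercal\bigl(\beta_k H_{k-1}+(1-\beta_k)\mathbf{I}\bigr)g_k,\]
with no need to control $\|R_{k-1}\|$ or $\|R_{k-1}^{-1}\|$ at all. Positivity of the right-hand side is then arranged by choosing $\beta_k$ (taking $\beta_k$ small enough that the symmetric factor is positive definite when $H_{k-1}$ is not). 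The paper stops there: monotone decrease plus boundedness of the sublevel set gives convergence of $\{f(x_k)\}$, which is all it actually proves. So your plan is simultaneously more ambitious than the paper's proof (you aim for $\|g_k\|\to 0$ and stationarity of limit points, which the paper does not establish) and incomplete precisely in the quantitative bounds that ambition requires. To salvage your route you would need either to impose conditions on $w_k^i$, $\beta_k$ and the line search that enforce the uniform bounds, or to retreat to the paper's weaker conclusion using the orthogonality trick above.
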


Since d-Net is the unfolding of AGD, from Theorem~\ref{dNetConv}, it is sure that the iterate sequence obtained by d-Net is non-increasing for any initial $x_0$ with properly learned parameters. Therefore, applying a sequence of d-Net (i.e. Gd-Net) on a bound function $f(x)$ from any initial point $x_0$ will result in a sequence of non-increasing function values. This ensures that the convergence of the sequence, which indicates that Gd-Net is convergent under the assumption of Theorem~\ref{dNetConv}.


\section{Escaping from local optimum}\label{escape}

Gd-Net guarantees convergence for locally convex functions. To approach global optimality, we present a method to escape from the local optimum once trapped. Our method is based on the filled-function method, and is embedded within the MDP framework.

\subsection{The Escaping Phase in the Filled Function Method}

In the escaping phase of the filled function method, a local search method is applied to minimize the filled function for a good starting point for next minimization phase. To apply the local search method, the starting point is set as $x_0+\delta_0 d$ where $x_0$ is the local minimizer obtained from previous minimization phase, $\delta_0$ is a small constant and $d$ is the search direction.

Many filled functions have been constructed (please see~\cite{zhang2004new} for a survey). One of the popular filled-functions~\cite{Ge1987A} is defined as follows
\begin{equation}\label{filled_function}
 H(x)=-\exp(a\|x-x_0\|^2)({f(x)-f(x_0)})
\end{equation}where $a$ is a hyper-parameter. It is expected that minimizing $H(x)$ can lead to a local minimizer which is away from $x_0$ due to the exist of the exponential term.




Theoretical analysis has been conducted on the filled function methods in terms of its escaping ability~\cite{Ge1987A}. However, the filled function methods have many practical weaknesses yet to overcome.

First, the hyper-parameter $a$ is critical to algorithm performance. Basically speaking, if $a$ is small, it struggles to escape from $x_0$, otherwise it may miss some local minima. But it is very hard to determine the optimal value of $a$. There has no theoretical results, neither rule of thumb on how to choose $a$.

Second, the search direction $d$ is also very important to the algorithmic performance. Different $d$'s may lead to different local minimizers, and the local minimizers are not necessarily better than $x_0$. In literature, usually a trial-and-error procedure is applied to find the best direction from a set of pre-fixed directions, e.g. along the coordinates~\cite{Ge1987A}. This is apparently not effective. To the best of our knowledge, no work has been done in this avenue.

Third, minimizing $H(x)$ itself is hard and may not lead to a local optimum, but a saddle point~\cite{Ge1987A} even when a promising search direction is used. Unfortunately, there is no studies on how to deal with this scenario in literature. Fig.~\ref{filled} shows a demo about this phenomenon. In the figure, the contour of $f(x)$ is shown in red lines, while the negative gradients of the filled function $H(x)$ are shown in blue arrows. From Fig.~\ref{filled}, it is seen that minimizing $H(x)$ from a local minimizer of $f$ at $(4,13)$ will lead to the saddle point at $(12,15)$.






\begin{figure}
\centering\includegraphics[width=0.8\columnwidth]{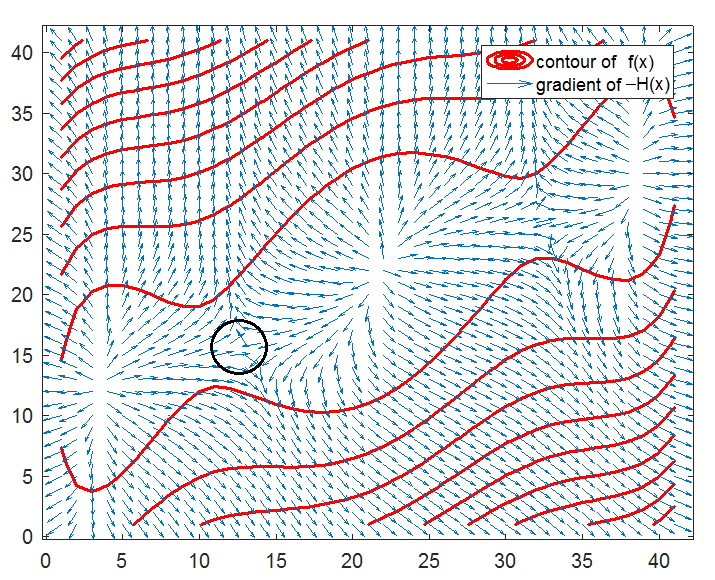}
\caption{Red lines show the contour of the three-hump function $f(x_1, x_2)$, blue arrows are the gradients of $-H(x_1, x_2)$. There is a saddle point at (12,15) for $-H(x_1, x_2)$ .}\label{filled}
\end{figure}


\subsection{The Proposed Escaping Scheme}\label{escaping_MDP}


The goal of an escaping phase is to find a new starting point $x^{\text{new}} =x^{\text{old}} + \Delta x$ such that $ x^{\text{new}}$ can escape from the attraction basin of $x^{\text{old}}$ (the local minimizer obtained from previous minimization phase) if a minimization procedure is applied, where $\Delta x = \delta d$, $d$ is the direction and $\delta$ is called the escaping length in this paper.

Rather than choosing $d$ from a pre-fixed set, we could sample some directions, either randomly or sequentially following certain rules. In this section, we propose an effective way to sample directions, or more precisely speaking $\Delta x$'s.



In our approach, the sampling of $\Delta x$ is modeled as a finite-horizon MDP. That is, the sampling is viewed as the execution of a policy $\pi$: at each time step $t$, given the current state $s_t$, and reward $r_{t+1}$, an action $a_t$, i.e. the increment $\Delta x$, is obtained by the policy. The policy returns $\Delta x $ by deciding a search direction $d_t$ and an escaping length $\delta_t$.

At each time step $t$, the state $s_t$ is composed of a collection of previously used search directions $\mathbf{d}_t = \{d_1, \cdots, d_{N_0}\}$ and their scores $\mathbf{u}_t = \{u_1, \cdots, u_{N_0}\}$, where $N_0$ is a hyper-parameter. Here the score of a search direction measures how promising a direction is in terms of the quality of the new starting point that it can lead to. A new starting point is of high quality if applying local search from it can lead to a better minimizer than current one. The initial state $s_0$ includes a set of $N_0$ directions sampled uniformly at random, and their corresponding scores.


In the following, we first define `score', then present the policy on deciding $a_t$ and $\delta_t$, and the transition probability $p(s_{t+1}|a_t, s_t)$. Without causing confusion, we omit the subscript in the sequel.




\subsubsection{Score} Given a search direction $d$, a local minimizer $x_0$, define \begin{equation}\label{rewardx} u_d(t)= -\nabla f(x_0+t\cdot d)^\intercal{d}\end{equation}where $t \in \mathbb{R}_{++}$ is the step size along $d$.


Since $x_0$ is a local minimizer, by definition, we cannot find a solution with smaller $f$ along $d$ if $x_0+td$ is within the attraction basin of $x_0$. However, if there is a $T$ such that $ x'=x_0+T\cdot d$ is a point with smaller criterion than $x_0$ (i.e. $f(x') < f(x_0)$), and there is no other local minimizer within ${\cal B}_0 = \{x \in \mathbb{R}^n | \|x - x_0\|_2 \leq \|x'-x_0\|_2\}$, we can prove that there exists a $\xi$, such that $u_d(t)<0 $ when $t\in [0,\xi)$, and $u_d(t)>0$ when $t\in (\xi,T]$ (proof can be seen in Appendix B). Theorem~\ref{reward} summarizes the result under the following assumptions:
\begin{itemize}
\item[(1)] $f(x)\in C^2(\mathbb{R}^n)$ has finite number of local minimizer.
\item[(2)] For every local optimum $x_{\ell}$, there exists a $r$ such that $f(x)$ is convex in ${\cal B}(x_{\ell},r) =\{x: \|x - x_{\ell}\|_2 \leq r\}$.
\item[(3)] The attractive basin of each local optimum is convex.
\end{itemize}
\begin{thm}\label{reward}
If $x'=x_0+T\cdot d$ is a point outside the boundary of $x_0$'s attraction basin, there is no other local minimizer within ${\cal B}_0 = \{x \in \mathbb{R}^n | \|x - x_0\|_2 \leq \|x'-x_0\|_2\}$. Then there exists a $\xi$ such that \[g(t)\triangleq f(x_0+t\cdot d),t\in [0,T]\]obtains its maximum at $\xi$. And $g(t)$ is monotonically increasing in $[0,\xi)$, and monotonically decreasing in $(\xi,T]$.
\end{thm}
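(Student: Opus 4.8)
The plan is to study the one–dimensional restriction $g(t)=f(x_0+t\,d)$ on $[0,T]$ directly, since $g\in C^2([0,T])$ with $g'(t)=\nabla f(x_0+t\,d)^\intercal d=-u_d(t)$ and $g''(t)=d^\intercal\nabla^2 f(x_0+t\,d)\,d$. In this language the assertion is exactly that $g'$ is positive on $[0,\xi)$ and negative on $(\xi,T]$, i.e.\ $u_d$ changes sign once, from negative to positive, as anticipated in the paragraph preceding the theorem. First I would pin down the two endpoints. Because $x_0$ is a local minimizer, $\nabla f(x_0)=0$ and hence $g'(0)=0$; by Assumption (2), $f$ is convex on $\mathcal{B}(x_0,r)$, so $g''\ge 0$ there, and (assuming $x_0$ is nondegenerate, so the Hessian is positive definite along $d$) together with $g'(0)=0$ this forces $g'(t)>0$ for small $t>0$. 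At the far end, the hypothesis $f(x')<f(x_0)$ gives $g(T)<g(0)$. Since $g$ is continuous on the compact interval $[0,T]$ it attains its maximum at some $\xi$, and the two facts just established (initial increase and $g(T)<g(0)$) place $\xi$ in the open interval $(0,T)$ with $g'(\xi)=0$ and $g(\xi)>g(0)>g(T)$.

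The substance of the theorem is the unimodality, namely that $\xi$ is the \emph{only} turning point. Here I would exploit the convexity of the attraction basins (Assumption (3)). A line meets a convex set in an interval, so the segment $\{x_0+t\,d\}$ leaves the attraction basin $B(x_0)$ of $x_0$ at a single parameter $t_b$ and never re-enters; likewise, since $f(x')<f(x_0)$ forces $x'$ into the (convex) basin $B(x_1)$ of some strictly lower minimizer $x_1$, a terminal sub-segment $(t_c,T]$ lies entirely in $B(x_1)$. The idea is to show that $g$ increases while the segment is governed by $x_0$ and decreases once it is governed by $x_1$, so that the maximum $\xi$ sits at the transition; the endpoint estimates of the previous paragraph supply the correct sign of $g'$ just after $0$ and, via Assumption (2) applied at $x_1$, just before $T$.

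The main obstacle is promoting these local sign statements to the global single–sign–change claim, because a zero of $g'$ is merely a point where $\nabla f$ is orthogonal to $d$, not a stationary point of $f$, so an interior critical point of $g$ need not be a minimizer of $f$. To close this gap I would argue by contradiction: if $g'$ vanished with a sign change more than once, $g$ would possess an interior local minimum at some $t_1\in(0,T)$, and I would aim to show that the hypothesis that $\mathcal{B}_0=\{x:\|x-x_0\|_2\le\|x'-x_0\|_2\}$ contains no local minimizer of $f$ other than $x_0$, combined with the convexity of the basins and the finiteness in Assumption (1), is incompatible with such an interior valley of $g$ inside $\mathcal{B}_0$. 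Making this exclusion fully rigorous, rather than appealing to the idealized picture in which the segment passes monotonically out of one convex basin and into another, is the delicate step, and it is where the three standing assumptions must all be used.
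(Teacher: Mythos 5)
Your outline follows essentially the same route as the paper's own proof: use Assumption (2) at $x_0$ to get $g'(0)=0$, $g''(0)>0$ and hence $g'>0$ just after $0$; use the local convexity around the far minimizer to get $g'<0$ just before $T$; invoke continuity of $g'$ (from $f\in C^2$) and the intermediate value theorem to produce an interior $\xi$ with $g'(\xi)=0$; and then appeal to the absence of other local minimizers in ${\cal B}_0$ for uniqueness. The step you single out as delicate --- upgrading the endpoint sign information to the claim that $g'$ changes sign exactly once --- is precisely the step the paper dispatches in a single sentence (``$\xi$ is unique since it is assumed that there is no other local minima between $x_0$ and $x_1$'') without addressing the difficulty you correctly diagnose: an interior zero of $g'$ only says $\nabla f(x_0+t\,d)\perp d$, not that $x_0+t\,d$ is a stationary point of $f$, so the no-other-minimizer hypothesis does not by itself exclude several sign changes of $g'$ along the segment. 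In that sense your proposal is faithful to the paper's argument and, if anything, more candid about where it is thin; but neither your sketch nor the paper's proof actually closes that gap. A complete argument would need something extra --- for instance, using the convexity of the two attraction basins (Assumption (3)) to show the segment crosses each basin in a single subinterval, and then a separate justification that $g'$ keeps one sign within each such subinterval, which does not follow from basin membership alone since membership does not control the sign of $\nabla f(x)^\intercal d$.
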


If we let $d = x' - x_0$, then $g'(t)=\nabla f(x_0+t(x'-x_0))^\intercal{(x'-x_0)}\triangleq-u_d(t)$. This implies that $u_d(t)$ is actually $- g'(t)$ along the direction from $x'$ pointing to $x_0$. This tells whether a direction $d$ can lead to a new minimizer or not. A direction $d$ with a positive $u_d(t)$ indicates that it could lead to a local minimizer different to present one.

We therefore define the score of a direction $d$, $u_d$, to be the greatest $u_d(t)$ along $d$, i.e.  \begin{equation} u_d \triangleq \max\limits_{t \in[0,T]} u_d(t)\label{scores}\end{equation} For such $d$ that $u_d > 0$, we say it is promising. 





In the following, we present the policy $\pi$ on finding $\Delta x $ (or new starting point). The policy includes two sub-policies. One is to find the new point given a promising direction, i.e. to find the escaping length. The other is to decide the promising direction.

\subsubsection{Policy on finding the escaping length}


First we propose to use a simple filled function as follows:
\begin{equation} \widetilde{H}(x) = - a \|x-x_0\|^2.\end{equation}Here $a$ is called the `escaping length controller' since it controlls how far a solution could escape from the current local optimum. Alg.~\ref{alg:distance determination} summarizes the policy proposed to determine the optimal $a^*$ and the new starting point $x$. 


\begin{algorithm}\caption{Policy on finding a new starting point }\label{alg:distance determination}
\begin{algorithmic}[1]
\REQUIRE a local minimum $x_0$, a direction $d$, a bound $M > 0$, an initial escaping length controller $a>0$, a learning rate $\alpha$, some constants $\delta_0>0$, $N\in \mathbb{Z}$ and $\epsilon > 0$\\
\ENSURE a new starting point $x$ and $u_d$ (the score of $d$)
\REPEAT
\STATE set $x_1=x_0+\delta_0 d$;
\STATE optimize $\widetilde{H}(x)$ along $d$ starting from $x_1$ for $N$ iterations, i.e. evaluate the criteria of a sequence of $N$ points defined by $x_j = x_{j-1}+\alpha 2a (x_{j-1} - x_0), 2\leq j \leq N$;\label{a1}
\STATE compute $F(a) \leftarrow \sum_{j=2}^N {f(x_j)/(j-1) } $;\label{a2}
\STATE $a \leftarrow a + \alpha F'(a)$;\label{a3}
\UNTIL{$|F'(a)| \leq \epsilon$ or $\|x_N - x_0\| \geq M$.}
\STATE $u_d \leftarrow  \max_{i=1, \cdots, N} -\nabla f(x_i)^\intercal d$;
\STATE if $\|x_N - x_0\| \geq M$, then set $u_d \leftarrow -|u_d|$;\label{stop}
\STATE \textbf{return}  $u_d$ and $x\leftarrow x_N$.
\end{algorithmic}
\end{algorithm}

In Alg.~\ref{alg:distance determination}, given a direction $d$, the filled function $\widetilde{H}(x)$ is optimized for $N$ steps (line~\ref{a1}). The sum of the iterates' function values, denoted as $F(a)$ (line~\ref{a2}), is maximized w.r.t. $a$ by gradient ascent (line~\ref{a3}). The algorithm terminates if a stable point of $F(a)$ is found ($|F'(a)| \leq \epsilon$), or the search is out of bound ($\|x_N - x_0\| \geq M$). When the search is out of bound, a negative score is set for the direction $d$ (line~\ref{stop}). As a by-product, Alg.~\ref{alg:distance determination} also returns the score $u_d $ of the given direction $d$. 

We prove that $x_N$ can escape from the attraction basin of $x_0$ and ends up in another attraction basin of a local minimizer $x'$ with smaller criterion {\em if $x'$ exists}. Theorem~\ref{existence} summarizes the result. 

%


\begin{thm}\label{existence}
Suppose that $x' =x_0+Td$ is a point such that $f(x_0)\geq f(x')$, and there are no other points that are with smaller or equal criterion than $f(x_0)$ within ${\cal B}_0$. If the learning rate $\alpha$ is sufficiently small, then there exists an $a^*$ such that $F'(a^*)=0$.
\end{thm}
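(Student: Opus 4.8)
The plan is to reduce the statement to an intermediate-value argument on a single scalar function. First I would solve the iterate recursion in Alg.~\ref{alg:distance determination}: since $x_j-x_0=(1+2\alpha a)(x_{j-1}-x_0)$ and $x_1-x_0=\delta_0 d$, every iterate lies on the ray through $d$, namely $x_j=x_0+t_j d$ with $t_j=(1+2\alpha a)^{j-1}\delta_0$, which is strictly increasing in both $j$ and $a$. Writing $g(t)=f(x_0+t\cdot d)$ as in Theorem~\ref{reward}, this turns the objective into a smooth scalar function $F(a)=\sum_{j=2}^N g(t_j)/(j-1)$. Differentiating and using $\partial t_j/\partial a=\tfrac{2\alpha(j-1)}{1+2\alpha a}t_j$, the weights $1/(j-1)$ cancel, giving
\[ F'(a)=\frac{2\alpha}{1+2\alpha a}\sum_{j=2}^N t_j\,g'(t_j)\triangleq\frac{2\alpha}{1+2\alpha a}\,\Phi(a). \]
Because the prefactor is positive, $F'(a)=0$ holds exactly when $\Phi(a)=0$, so it suffices to produce a sign change of $\Phi$.

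Next I would exploit Theorem~\ref{reward}: under the stated hypotheses $g$ increases on $[0,\xi)$ and decreases on $(\xi,T]$, so $g'(t)>0$ for $t\in(0,\xi)$ and $g'(t)<0$ for $t\in(\xi,T)$. Taking $\delta_0<\xi$ (the small constant in the algorithm), as $a\to 0^+$ every $t_j\to\delta_0<\xi$, so each summand $t_j g'(t_j)>0$ and hence $\Phi(a)>0$ for small $a$. For the opposite sign I would raise $a$ to the value $\bar a$ at which the farthest iterate reaches $x'$, i.e.\ $t_N=T$; this stays within the stopping bound once $M\ge T\|d\|$.

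To sign $\Phi(\bar a)$ I would rewrite it as a positive multiple of a Riemann sum. Since $t_j-t_{j-1}=t_j\,\tfrac{2\alpha a}{1+2\alpha a}$, one has $\Phi(a)=\tfrac{1+2\alpha a}{2\alpha a}\sum_{j=2}^N (t_j-t_{j-1})\,g'(t_j)$, and the inner sum approximates $\int_{\delta_0}^{T} g'(t)\,dt=g(T)-g(\delta_0)=f(x')-f(x_0+\delta_0 d)$. Because $x_0$ is a local minimizer and $g$ increases on $[0,\xi)$, we have $g(\delta_0)>g(0)=f(x_0)\ge f(x')=g(T)$, so this integral is strictly negative and thus $\Phi(\bar a)<0$. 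Continuity of $\Phi$ in $a$ together with the intermediate value theorem then yields $a^*\in(0,\bar a)$ with $\Phi(a^*)=0$, i.e.\ $F'(a^*)=0$.

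The main obstacle is precisely this last step: $\Phi$ is a sum of mixed sign (iterates short of the ridge $\xi$ contribute positively, those past it negatively), so the negativity of the limiting integral does not transfer to the finite sum for free, and the work is to control the Riemann-sum error. I see two routes. For moderate $N$ one can instead pick $a$ so that even the nearest summand passes the ridge ($t_2>\xi$ while $t_N\le T$), making every term negative outright. For general $N$ the geometric spacing of the $t_j$ concentrates the mass in the outer, negative terms, and a sufficiently small learning rate $\alpha$, together with the bound $M$, keeps all iterates inside $[0,T]$ where Theorem~\ref{reward} controls the sign of $g'$; one then shows this error is dominated so that $\Phi(\bar a)<0$ is preserved. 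I expect verifying this domination to be the only non-routine part of the argument.
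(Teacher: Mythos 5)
Your proposal follows essentially the same route as the paper's Appendix~B: solve the recursion to get $t_j=(1+2\alpha a)^{j-1}\delta_0$, reduce $F'(a)$ to $\frac{1}{a}\sum_{j=2}^N g'(t_j)(t_j-t_{j-1})$ (your $\Phi$ is the same quantity up to the positive prefactor; compare Eq.~\ref{fa}), exhibit a sign change, and invoke the intermediate value theorem. The one genuine difference is at the positive endpoint: you send $a\to 0^+$ so that all $t_j$ cluster near $\delta_0<\xi$, whereas the paper (Lemma~\ref{lth3}) picks $a_1$ so that $t_N=\xi$. Both are rigorous for the same reason — every summand $g'(t_j)(t_j-t_{j-1})$ is positive because all iterates sit in $[\delta_0,\xi]$ — and your version is arguably cleaner since it needs no existence argument for $a_1$. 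Note also that the paper's theorem allows intermediate local minimizers with criteria above $f(x_0)$; the appendix handles this by choosing $\delta_0$ with $g(\delta_0)>f_{\min}$ and replacing $\xi$ by the last maximizer $\xi_{L+1}$, while your argument as written assumes the single-ridge profile of Theorem~\ref{reward}.

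On the step you flag as the crux — transferring the negativity of $\int_{\delta_0}^{T}g'(t)\,dt=f(x')-f(x_0+\delta_0 d)<0$ to the finite sum at $\bar a$ — the paper closes it by Lemma~\ref{lemma1}, which asserts $\lim_{\alpha\to 0}F'(a)=\frac{1}{a}\int_{t_1}^{t_N}g'(t)\,dt$ and then concludes $F'(a_2)<0$ for $\alpha$ small. Your instinct that this needs care is correct: once $t_N$ is pinned to $T$, the ratio $1+2\alpha a_2=(T/\delta_0)^{1/(N-1)}$ is determined, so the partition $\{t_j\}$ is a fixed geometric mesh that does not refine as $\alpha\to 0$; the Riemann-sum error is really controlled by $N$ (or by regularity of $g'$), not by $\alpha$ alone, and the paper's proof is silent on this. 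Of your two repair routes, route (i) (choose $a$ with $t_2>\xi$ and $t_N\le T$) fails in general, since it requires $(\xi/\delta_0)^{N-1}\le T/\delta_0$, which is violated for $N\ge 3$ once $\delta_0$ is small or $\xi$ is close to $T$. Route (ii) is the viable one and is exactly what Lemma~\ref{lemma1} is meant to supply; making it airtight requires either taking $N$ large enough that the mesh $T\bigl(1-(\delta_0/T)^{1/(N-1)}\bigr)$ beats the modulus of continuity of $g'$ against the margin $|f(x')-f(x_0+\delta_0 d)|$, or an additional structural assumption. So your proposal reproduces the paper's argument, is fully rigorous where the paper is, and honestly isolates the one step that the paper itself passes over too quickly.
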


According to this theorem, we have the following corollary.
\begin{corollary}\label{cor1}
Suppose that $x_N$ is the solution obtained by optimizing $\widetilde{H}(x) = - a^*\|x-x_0\|_2^2$ along $d$ starting from $x_0+\delta_0 d$ at the $N$-th iteration, then $x_N$ will be in an attraction basin of $x'$, if the basin ever exists.
\end{corollary}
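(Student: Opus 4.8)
The plan is to exploit the explicit geometry of the iterates produced in line~\ref{a1} of Alg.~\ref{alg:distance determination} together with the sign pattern of the score $u_d(t)$ established in Theorem~\ref{reward}. First I would note that every iterate stays on the ray through $x_0$ in direction $d$: since $x_1 = x_0 + \delta_0 d$ and $x_j - x_0 = (1+2\alpha a)(x_{j-1}-x_0)$, an easy induction gives $x_j = x_0 + t_j d$ with $t_j = (1+2\alpha a)^{j-1}\delta_0$, a strictly increasing geometric sequence in $j$ for $a>0$. Hence the whole question collapses onto the one-dimensional profile $g(t)=f(x_0+td)$ analyzed in Theorem~\ref{reward}.

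Next I would differentiate $F(a)=\sum_{j=2}^N f(x_j)/(j-1)$. Using $\partial t_j/\partial a = 2\alpha(j-1)(1+2\alpha a)^{j-2}\delta_0$ together with $u_d(t)=-\nabla f(x_0+td)^\intercal d$, the factor $(j-1)$ cancels and I obtain
\[ F'(a) = -2\alpha\delta_0\sum_{j=2}^N (1+2\alpha a)^{j-2}\,u_d(t_j). \]
Evaluating at the stationary point $a^*$ guaranteed by Theorem~\ref{existence} turns $F'(a^*)=0$ into the balance condition $\sum_{j=2}^N w_j\,u_d(t_j^*)=0$, where the weights $w_j=(1+2\alpha a^*)^{j-2}$ are strictly positive and $t_j^*=(1+2\alpha a^*)^{j-1}\delta_0$. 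The core of the proof is then a sign contradiction: suppose the farthest iterate satisfied $t_N^*\le \xi$. Because $t_j^*$ increases with $j$, every $t_j^*\le\xi$, so Theorem~\ref{reward} gives $u_d(t_j^*)\le 0$, with strict negativity on the nonempty index set where $t_j^*<\xi$ (for $N>2$ this includes $j=2$ since $t_2^*<t_N^*$). With strictly positive weights this makes the weighted sum strictly negative, contradicting the balance condition. Therefore $t_N^*>\xi$, i.e. $x_N=x_0+t_N^* d$ lies beyond the ridge where $g$ attains its maximum.

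To finish, I would invoke the basin structure. On $(\xi,T]$ the profile $g$ is strictly decreasing toward the minimizer $x'=x_0+Td$ (Theorem~\ref{reward}), and under assumptions (2)--(3) of that theorem the attraction basin of $x'$ is convex and contains no competing local minimizer inside ${\cal B}_0$; hence any point of this downhill segment, and in particular $x_N$, belongs to the attraction basin of $x'$. This is exactly the corollary, and the statement is vacuous precisely when no such $x'$ exists, matching the ``if the basin ever exists'' qualification.

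The hard part will be ruling out an overshoot, i.e. guaranteeing $t_N^*\le T$ so that $x_N$ has not passed $x'$ into the region $t>T$ where $g$ climbs out of the basin and $u_d$ turns negative again. I would control this with the out-of-bound safeguard $\|x_N-x_0\|\ge M$ in Alg.~\ref{alg:distance determination}, which caps the escape length, together with the observation that the geometrically growing weights $w_j$ make the heavily weighted terminal scores dominate the balance, so the zero-crossing of the weighted sum is pinned near the ridge rather than beyond $x'$. A secondary technicality is the degenerate boundary case $t_N^*=\xi$ (or $N=2$ with $t_2^*\in\{\xi,T\}$), which I would dismiss as non-generic or absorb into the closure of the descent region.
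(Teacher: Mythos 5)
Your setup is correct and your first half is genuinely good: the iterates do satisfy $x_j=x_0+t_jd$ with $t_j=(1+2\alpha a)^{j-1}\delta_0$, your formula $F'(a)=-2\alpha\delta_0\sum_{j=2}^N(1+2\alpha a)^{j-2}u_d(t_j)$ agrees with Eq.~\ref{fa} in the paper's Appendix~B, and the sign-contradiction argument forcing $t_N^*>\xi$ is sound (and is a cleaner discrete statement than the paper's). The gap is exactly where you say the hard part is, and neither of your proposed patches closes it. The corollary needs $x_N$ to land \emph{inside} the basin of $x'$, so you must rule out $t_N^*>T$, where $x_N$ exits ${\cal B}_0$ and nothing whatsoever is known about $f$. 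The safeguard $\|x_N-x_0\|\geq M$ cannot do this: $M$ is a user-chosen bound with no relation to $T$, and when it triggers Alg.~\ref{alg:distance determination} simply flags the direction as non-promising rather than certifying containment. The claim that the geometrically growing weights ``pin the zero-crossing near the ridge'' is also not true; as the correct argument below shows, the stationary point is determined by a level-matching condition $g(t_N)=g(t_1)$, and $t_N$ can sit anywhere in $(\xi,T)$, possibly far from $\xi$.

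The missing idea is already latent in your own expression. Since $t_j-t_{j-1}=2a\alpha(1+2a\alpha)^{j-2}\delta_0$, your weighted sum is precisely $\frac{1}{a}\sum_{j=2}^N g'(t_j)\,(t_j-t_{j-1})$, a Riemann sum, so by Lemma~\ref{lemma1} one has $F'(a)\to\frac{1}{a}\bigl(g(t_N)-g(t_1)\bigr)$ as $\alpha\to 0$. Hence $F'(a^*)=0$ forces $g(t_N)\approx g(t_1)=f(x_0+\delta_0 d)$. Theorem~\ref{reward} then finishes both halves at once: $g$ is increasing on $[0,\xi)$ and decreasing on $(\xi,T]$, with $g(T)=f(x')<f(x_0)\leq g(t_1)<g(\xi)$, so the unique $t>t_1$ at which $g$ returns to the level $g(t_1)$ lies strictly in $(\xi,T)$ --- past the ridge but strictly before $x'$, hence inside the basin of $x'$ within ${\cal B}_0$. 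This telescoping/level-set identity is the paper's (implicit) route, packaged in the ``conservation of energy'' picture; it also dissolves your degenerate boundary cases $t_N^*\in\{\xi,T\}$ without any genericity assumption. As written, your proof establishes only that $x_N$ clears the ridge, which is strictly weaker than the corollary.
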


Theorem~\ref{existence} can be explained intuitively as follows. Consider pushing a ball down the peak of a mountain with height $-f(x_1)$ (it can be regarded as the ball's gravitational potential energy) along a direction $d$. The ball will keep moving until it arrives at a point $\tilde{x}=x_0+\tilde{t}d$ for some $\tilde{t}$ such that $f(x_1)=f(\tilde{x})$. For any $t\in[\delta_0,\tilde{t})$, the ball has a positive velocity, i.e. $g(t)-g(t_1) > 0$ where $t_1 = \delta_0$. But the ball has a zero velocity at $\tilde{t}$, and negative at $t > \tilde{t}$, Hence $\int (f(x_0+td)-f(x_1))dt$ reaches its maximum in $[0, \tilde{t}]$. The integral is approximated by its discrete sum, i.e. $F(a)$, in Alg.~\ref{alg:distance determination}.


Further, according to the law of the conservation of energy, the ball will keep moving until at some $\tilde{t}$, $f(x_0+td)- f(x_1)= 0$ in which case $F'(a) = 0$. This means that the ball falls into the attraction basin of a smaller criterion than $f(x_1)$ as shown in Fig.~\ref{explain2}(b). Fig.~\ref{explain2}(a) shows when $a$ is small, in $N$ iterations, the ball reaches some $t_N$ but $F'(a)\approx\int_{t_1}^{t_N} (g'(t) )> 0$.

Moreover, if there is no smaller local minimizers in search region, the ball will keep going until it rolls outside the restricted search region bounded by $M$ as shown in Fig.~\ref{explain2}(e) which means Alg.~\ref{alg:distance determination} fails to find $a^*$. Fig.~\ref{explain2}(c)(d) show the cases when there are more than one local minimizers within the search region.

\begin{figure*}
\centering
\includegraphics[scale=0.29]{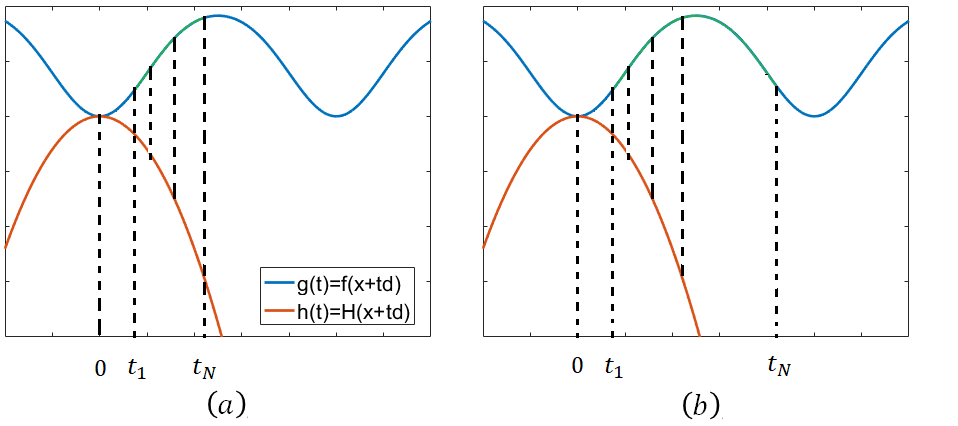}
\includegraphics[scale=0.28]{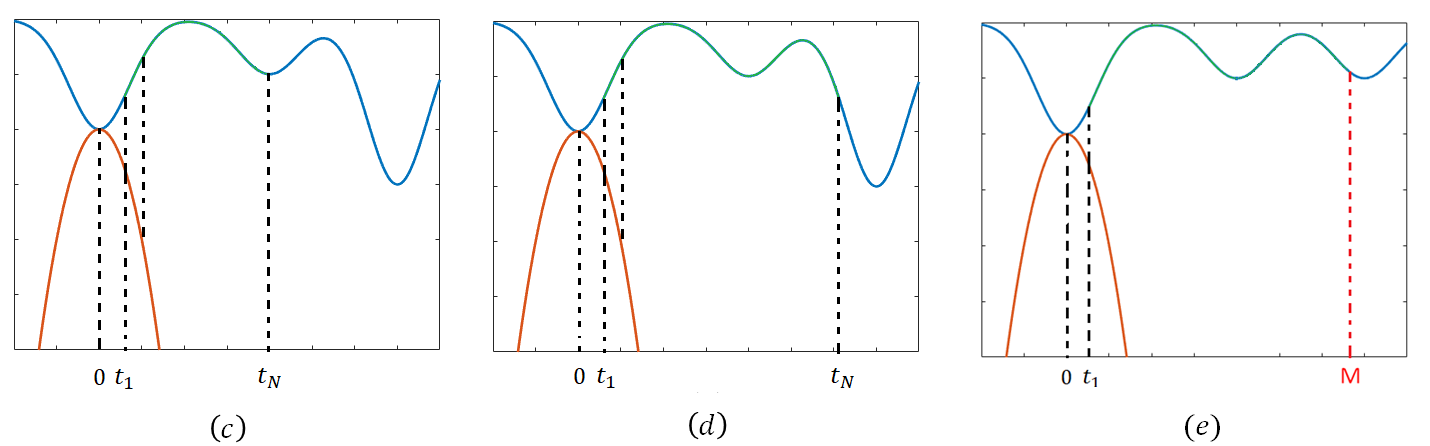}\centering
\caption{Possible scenarios encountered when estimating $a^*$. (a) shows the case when $a$ is not large enough, while (b) shows when $a$ is appropriate. (c) shows that $x_N = x_0 + t_N d$ reaches a local minimum, but $F'(a)\neq 0$ because $g(t_N) -g(t_1) \neq 0$; (d) shows the case when there are more than one local minimizer. (e) shows when there is no smaller local minimizer within $\|x_N - x_0\| \geq M$.}\label{explain2}
\end{figure*}

Once such $a^*$ has been found, the corresponding $x_N$ will enter an attraction basin of a local minimum with smaller criterion than $x_0$. If we cannot find such an $a^*$ in the direction of $d$ within a distance $M$ to $x_0$, we consider that there is no another smaller local minimum along $d$. If it is the case, $d$ is non-promising. We thus set a negative score for it as shown in line~\ref{stop} of Alg.~\ref{alg:distance determination}. 


It is seen that the running of line~\ref{a3} of Alg.~\ref{alg:distance determination} requires to compute $N$ gradients of $f(x)$ at each iteration. This causes Alg.~\ref{alg:distance determination} time consuming. We hereby propose to accelerate this procedure by fixing $a$ but finding a proper number of iterations.  Alg.~\ref{accelerate} summarizes the fast policy. Given a direction $d$, during the search, the learning rate $\alpha$ and the escaping length controller $a$ are fixed. At each iteration of Alg.~\ref{accelerate}, an iterate $x_i$ is obtained by applying gradient descent over $\widetilde{H}(x)$. The gradient of $x_i$ over $f(x)$ is computed (line~\ref{c1}). $Q_i = \sum_{j=1}^i \nabla f(x_j)^\intercal (x_j - x_{j-1})$ is computed (line~\ref{c2}). Alg.~\ref{accelerate} terminates if there is an $i$, such that $Q_i > 0 \text{\ and\ } Q_{i-1} < 0$ or the search is beyond the bound. It is seen that during the search, at each iteration, we only need to compute the gradient for once, which can significantly reduce the computational cost in comparison with Alg.~\ref{alg:distance determination}.

\begin{algorithm}\caption{Fast policy on finding a new starting point}\label{accelerate}
\begin{algorithmic}[1]
\REQUIRE a local minimizer $x_0$, a search direction $d$, a bound $M > 0$ and positive scalars $a, \delta_0, \epsilon$ and $\alpha$;
\ENSURE  score $u_d$ and $x$
\STATE set $ i \leftarrow 1$ and $Q_i = 0$;
\STATE compute $x_i \leftarrow x_{i-1} + \delta_0 d$ and $\nabla f(x_i)$;
\REPEAT
\STATE set $s_Q \leftarrow Q_i$;
\STATE $i \leftarrow i +1$
\STATE compute $x_i \leftarrow x_{i-1} - \alpha\cdot 2a (x_{i-1} - x_0)$ and  $\nabla f(x_{i})$; \label{c1}
\STATE compute $ Q_i \leftarrow Q_{i-1} + \nabla f(x_{i})^\intercal(x_{i}-x_{i-1}) $ \label{c2}
\UNTIL{ $\{s_Q < 0\  \& \ Q_i  > 0 \}$ or $\|x_i - x_0\| \geq M$ }
\STATE compute $u_d= \max_{j=1,...,i}-\nabla f(x_j)^{\intercal} d$;
\STATE if $\|x_N - x_0\| \geq M$, set $u_d \leftarrow -|u_d|$;
\STATE \textbf{return} $u_d$ and $x = x_i$.
\end{algorithmic}
\end{algorithm}

Alg.~\ref{accelerate} aims to find an integer $i$ such that $Q_{i-1} < 0$ but $Q_{i} > 0$. The existence of such an $i$ can be illustrated as follows. It is seen that $Q_i = aF'(a)$ (please see Eq.~\ref{fa} in Appendix B). This implies that $Q_i=\int_{t_1}^{t_i} g'(t)dt$. When $\alpha\rightarrow 0$, we have $i_1$ and $i_2$ so that $|t_{i_1}-\xi|<\varepsilon$ and $|t_{i_2}-T|<\varepsilon$ for any $\varepsilon >0$, and $Q(i_1)<0$ and $Q(i_2)>0$. Thus, there exists an $i$ such that $Q(i)>0$ and $Q(i-1)<0$. 

Corollary~\ref{cor1} proves that if there exists a better local minimum $x'$ along $d$, then applying Alg.~\ref{alg:distance determination} or Alg.~\ref{accelerate}, we are able to escape from the local attraction basin of $x_0$.

\subsubsection{Policy on the sampling of promising directions}

In the following, we show how to sample directions that are of high probability to be promising. We first present a fixed policy, then propose to learn for an optimal policy by policy gradient.

\begin{algorithm}[htbp]\caption{Fixed policy on sampling promising direction}\label{alg:sampling with curiosity}
\begin{algorithmic}[1]
\REQUIRE a local minimizer $x_0$, an integer $P>0$ and $\sigma > 0$
\ENSURE a set of candidate directions and starting points
\STATE sample ${N_0}$ directions $\{d_i\}_{i=1}^{N_0}$ uniformly at random; apply Alg.~\ref{alg:distance determination} or Alg.~\ref{accelerate} to obtain their scores $\{u_i\}_{i=1}^{N_0}$ and  $\{x_i\}_{i=1}^{N_0}$; \label{fp1}
\STATE  set ${\cal S} \leftarrow \emptyset, {\cal D} \leftarrow \emptyset, t\leftarrow 1$. \label{fp6}
\REPEAT
\STATE sample
\[\tilde{d} = \sum_{i: u_i < 0} u_i d_i - \sum_{i: u_i > 0} u_i d_i +\varepsilon, \varepsilon\sim {\cal N}(0,\sigma^2)\]\label{fp2}
\STATE apply Alg.~\ref{alg:distance determination} or  Alg.~\ref{accelerate} to obtain $\tilde{u}$ and $\tilde{x}$.
\IF{$\tilde{u} > 0$}
	\STATE set ${\cal S}\leftarrow {\cal S}\bigcup \{\tilde{x}\}$ and  ${\cal D}\leftarrow {\cal D}\bigcup \{\tilde{d}\} $. \label{fp3}
\ENDIF
\STATE set  $u_i \leftarrow u_{i+1}, 1\leq i \leq N_0 -1 $ and $u_{N_0} \leftarrow \tilde{u}$; \label{fp4}
\STATE set  $d_i \leftarrow d_{i+1}, 1\leq i \leq N_0 -1 $ and $d_{N_0} \leftarrow \tilde{d}$; \label{fp7}
\STATE $t \leftarrow t + 1$.
\UNTIL{$t \geq P$.}
\STATE return  $\cal S$ and $\cal D$.
\end{algorithmic}
\end{algorithm}


Alg.~\ref{alg:sampling with curiosity} summarizes the fixed policy method. In Alg.~\ref{alg:sampling with curiosity}, first a set of directions are sampled uniformly at random (line~\ref{fp1}). Their scores are computed by Alg.~\ref{alg:distance determination} or Alg.~\ref{accelerate}. Archives used to store the directions and starting points are initialized (line~\ref{fp6}).  A direction is sampled by using a linear combination of previous directions with their respective scores as coefficients (line~\ref{fp2}). If the sampled direction has a positive score, its score and the obtained starting point are included in the archive. The sets of scores and directions are updated accordingly in a FIFO manner (lines~\ref{fp4}-\ref{fp7}). The algorithm terminates if the number of sampling exceeds $P$.


We hope that the developed sampling algorithm is more efficient than that of the random sampling in terms of finding promising direction. $P_r$ denotes the probability of finding a promising direction by using the random sampling, $P_c$ be the probability by the fixed policy. Then in Appendix C, we will do some explanation why $P_c>P_r$.



\subsubsection{The transition} In our MDP model, the probability transition $p(s_{t+1}|s_t,a_t)$ is deterministic. The determination of new starting point depends on the sampling of a new direction $d_t$ and its score $u_t$. New state $s_{t+1}$ is then updated in a FIFO manner. That is, at each time step, the first element $(\mathbf{u}_1,\mathbf{d}_1)$ in $s_t$ is replaced by the newly sampled $(\mathbf{u}_t, \mathbf{d}_t)$.

All the proofs in this section are given in Appendix B.



\subsection{Learning the Escaping Policy by Policy Gradient}\label{mechanism_RL}


In the presented policy, a linear combination of previous directions with their scores as coefficients is applied to sample a new direction. However, this policy is not necessarily optimal. In this section, we propose to learn an optimal policy by the policy gradient algorithm~\cite{Sutton1998Reinforcement}.  

The learning is based on the same foregoing MDP framework. The goal is to learn the optimal coefficients for combining previously sampled directions. We assume that at time $t$, the coefficients are obtained as follows:
\begin{eqnarray}
\mathbf{m}_t 	&=& g(-|\mathbf{u}_t|; {\theta});\nonumber\\
\mathbf{w}_t 	&=& -|\mathbf{u}_t| + \mathbf{m}_t;\nonumber
\end{eqnarray}where $\mathbf{u}_t = [u_1, \cdots, u_{N_0}]^\intercal$ and $\mathbf{d}_t = [d_1, \cdots, d_{N_0}]$. $\mathbf{m}_t \in \mathbb{R}^{N_0}$ is the output of a feed-forward neural network $g$ with parameter $\theta$, and $\mathbf{w}_t \in \mathbb{R}^{N_0}$ is the coefficients. The current state $s_t$ is the composition of $\mathbf{u}_t$ and $\mathbf{d}_t$.

Fig.~\ref{mechanism_RLfigure} shows the framework of estimating the coefficients and sampling a new direction at a certain time step. For the next time step, $\mathbf{u}_{t+1}$ and $\mathbf{d}_{t+1}$ are updated
\begin{eqnarray*}
&u_i = u _{i+1}, \text{for}\ 1\leq i \leq N_0 -1\  \text{and}\  u_{N_0} = \tilde{u}\\
&d_i = d _{i+1}, \text{for}\ 1\leq i \leq N_0 -1\  \text{and}\  d_{N_0} = \tilde{d}
\end{eqnarray*}and $\mathbf{u}_{t+1} = \{u_i\}, \mathbf{d}_{t+1} = \{d_i\}$, $s_{t+1}  = \{\mathbf{u}_{t+1}, \mathbf{d}_{t+1}\}$.

The policy gradient algorithm is used to learn $\theta$ for the neural network $g$. We assume that $\phi(s_t;\theta) = \mathbf{w}_t^\intercal \mathbf{d}_t $ and the policy can be stated as follows:
\begin{eqnarray}
\pi(d| s_t)  &=& {\cal N}\left(d| \phi(s_t;\theta), \sigma^2\right)\nonumber
\end{eqnarray}
The reward is defined to be
\begin{equation}\label{def_reward}
\begin{split}
\begin{array}{llll}
r_{1}		&=&\gamma \cdot \mathbb{I}(\tilde{u}>0) &\\
r_{t+1} 	&=&\gamma^t \cdot \mathbb{I}(\tilde{u}>0) & \text{if}\ r_{1:t-1}\ \text{are all zero}\\
r_{t+1} 	&=& 0					    &\text{if}\ r_{1:t}\ \text{are not all zero}
\end{array}
\end{split}
\end{equation}where $\gamma = 0.9$ is a constant, $\tilde{u}$ is the score of the sampled direction $\tilde{d}$ and $\mathbb{I}(\cdot)$ is the indicator function.


Alg.~\ref{alg:training RL} summaries the policy gradient learning procedure for $\theta$. $\theta$ is updated in $E$ epochs. At each epoch, first a sample of trajectories is obtained (lines~\ref{pgt1}-\ref{pgt2}). Given $x_0$, a trajectory can be sampled as follows. First, a set of $N_0$ initial directions is randomly generated and their scores are computed by Alg.~\ref{accelerate} (lines~\ref{pgt10}-\ref{pgt11}). $P$ new directions and their corresponding scores are then obtained (lines~\ref{pgt21}-\ref{pgt22}). At each step, the obtained direction $\tilde{d}$, the policy function $\phi(s_t;\theta)$ and the reward $r_{t+1}$ are gathered in the current trajectory $T_m$ (line~\ref{pgt23}). After the trajectory sampling, $\Delta\theta$ and $\theta_l$ are updated in lines~\ref{pg1}-~\ref{pg2} and line~\ref{pgo1}, respectively.
\begin{algorithm}[htbp]\caption{Training policy network with policy gradient}\label{alg:training RL}
\begin{algorithmic}[1]
\REQUIRE a local minimum $x_0$, an integer $P>0$, the number of training epochs $E$, the number of trajectories $N_T$, $\sigma > 0$ and learning rate $\beta>0$.
\ENSURE the optimal network parameter $\theta^*$.
\STATE randomly initialize $\theta_1 \in \mathbb{R}^d$; 
\FOR {$l=1:E$}
        \STATE {\em // create $N_T$ trajectories}  \label{pgt1}
        \FOR {$m=1:N_T$}
        \STATE set $T_m=\emptyset$;
        \STATE sample ${N_0}$ directions $\mathbf{d}_0 = \{d_i\}_{i=1}^{N_0}$ uniformly at random;\label{pgt10}
        \STATE apply Alg.~\ref{accelerate} to obtain their scores $\mathbf{u}_0 = \{u_i\}_{i=1}^{N_0}$ and  $\{x_i\}_{i=1}^{N_0}$; \label{pgt11}
        \STATE  set ${\cal S} \leftarrow \emptyset, {\cal D} \leftarrow \emptyset, t\leftarrow 0$;
    \REPEAT \label{pgt21}
	\STATE {\em // sample new direction}
        \STATE sample $\varepsilon\sim {\cal N}(0,\sigma^2) $;
        \STATE compute $\phi(s_t;\theta)= \sum [-|u_i| + g(-|\mathbf{u}_t|; {\theta_l})]_i d_i $ and $\tilde{d} =  \phi(s_t;\theta) +\varepsilon$;
        \STATE apply Alg.~\ref{accelerate} to obtain $\tilde{u}$;
        \STATE {\em // update the state}
        \STATE set $r_{t+1}$ by Eq.~\ref{def_reward}
        \STATE $u_i = u _{i+1}, 1\leq i \leq N_0 -1 $ and $u_{N_0} = \tilde{u}$;
        \STATE $d_i = d_{i+1}, 1\leq i \leq N_0 -1 $ and $d_{N_0} = \tilde{d}$;
        \STATE set $\mathbf{u}_{t+1} = \{u_i\}$, $\mathbf{d}_{t+1} = \{d_i\}$ and $s_{t+1}  = \{\mathbf{u}_{t+1}, \mathbf{d}_{t+1}\}$;
        \STATE {\em // update the trajectory}
        \STATE $T_m \leftarrow T_m\bigcup \{\tilde{d},\phi(s_t;\theta),r_{t+1}\}$; \label{pgt23}
        \STATE $t \leftarrow t + 1$.
\UNTIL{$t \geq P$.} \label{pgt22}
\ENDFOR \label{pgt2}
    \STATE {\em // policy gradient} \label{pg1}
    \STATE $\Delta\theta=0$
    \FOR {$m=1:N_T$}
    \STATE  $$\Delta\theta=\Delta\theta+
    \sum\limits_{s_t\in T_m}\nabla_{\theta}\phi(s_t;\theta)(\phi(s_t;\theta)-\tilde{d})\left(\sum\limits_{r_t\in T_m}r_t\right)$$
    \ENDFOR \label{pg2}
    \STATE $\theta_{l+1}=\theta_{l}+\beta\Delta\theta$; \label{pgo1}
\ENDFOR
\RETURN $\theta^* \leftarrow \theta_{E+1}$
\end{algorithmic}
\end{algorithm}

\begin{figure}
\centering\includegraphics[width = \columnwidth]{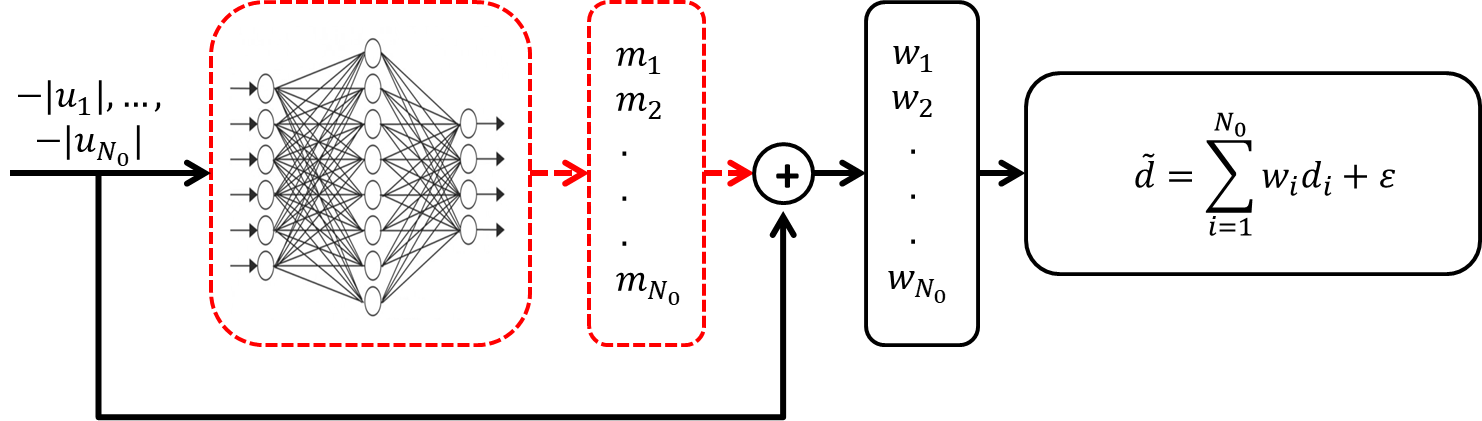}
\caption{The framework of the escaping policy on the $t$-th time step. The black solid line is the policy proposed in section~\ref{escaping_MDP}. The red dash line covers the feed-forward network and its output.}\label{mechanism_RLfigure}
\end{figure}

\subsection{Learning to be Global Optimizer}

Combining the proposed local search algorithm and the escaping policy, we can form a global optimization algorithm.  Alg.~\ref{adff} summarizes the algorithm, named as L$_2$GO. Starting from an initial point $x$, Gd-Net is firstly applied to obtain a local minimizer (line~\ref{lgo1}). The escaping policy is applied to sample $P$ new starting points (line~\ref{lgo2}). Gd-Net is then applied on these points (line~\ref{lgo3}). The algorithm terminates if the prefixed maximum number of escaping tries (i.e. $K$) has been reached (line~\ref{stop11}), or no new promising directions can be sampled (line~\ref{stop12}). 
If any of these conditions have been met, it is assumed that a global optimum has been found.

\begin{algorithm}\caption{The proposed global optimization algorithm based on learning to learn (L$_2$GO).}\label{adff}
\begin{algorithmic}[1]
\REQUIRE an initial point $x$, integers $K>0$
\ENSURE a global minimum $x^*$
\STATE $k\leftarrow 1$, stop = 0;
\STATE apply Gd-Net on $x$ to obtain a local minimizer $x_k$;\label{lgo1}
\REPEAT
\STATE if {$k \geq K$} stop = 1. \label{stop11}
\STATE apply the fixed escaping policy or the learned policy on $x_k$ to obtain a set $\cal S$ containing new starting points. \label{lgo2}
\IF {$\cal S=\emptyset$} \label{stop12}
\STATE $x^*\leftarrow x_k$, stop = 1.
\ELSE
\STATE apply Gd-Net to points in $\cal S$ to obtain a set of local minimizers, denoted as ${\cal S}^*$. \label{lgo3}
\STATE set $x_k=\argmin_{x\in {\cal S}^*} f(x) $
\STATE $k\leftarrow k+1$.
\ENDIF
\UNTIL{stop}
\STATE $x^*\leftarrow x_k$
\STATE return $x^*$.
\end{algorithmic}
\end{algorithm}

\begin{note}We should highlight that our method surpasses some filled function methods in the sense that our method has more chances to escape from local optimum. For example, consider the following filled function~\cite{Liang2007A}:
\begin{equation}\label{filled2}
H(x)=-a\|x-x_0\|^2+\min\{0,f(x)-f(x_0)\}^3
\end{equation}
\end{note}

The existence of the stable point $x_{\text{fill}}$ to $H$ usually holds. But $x_{\text{fill}}$ can be a saddle point or a local optimizer. If $x_{\text{fill}}$ is a saddle point, then to escape $x_0$, it is only possible by searching along $d_{\text{fill}}= x_{\text{fill}} - x_0$. However, it is highly unlikely $d_{\text{fill}}$ be contained in the pre-fixed direction set of the traditional filled function methods. This indicates that the corresponding filled function method will fail. 

On the other hand, if $x_{\text{fill}}$ is a local minimizer of $H(x)$, we can prove that the proposed policy can always find a promising solution. Theorem~\ref{surpass} summarizes the result. \newline
\begin{thm}Suppose there exists an attraction basin of $x_{\text{fill}}$ on the domain of $H(x)$, denoted as $B_{\text{fill}}$, then $\forall$ $x\in B_{\text{fill}} $, we have $u_d > 0$ for $d = x - x_0$.\label{surpass}
\end{thm}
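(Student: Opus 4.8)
The plan is to reduce the claim to a one-dimensional statement about $f$ along the ray $x_0+t d$, then to locate the relevant behaviour from the stationarity of $H$.

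First I would rewrite the score. With $d=x-x_0$ fixed, set $g(t)=f(x_0+t d)$, so that by \eqref{rewardx} we have $u_d(t)=-g'(t)$, and by \eqref{scores} $u_d=-\min_{t\in[0,T]}g'(t)$. Hence $u_d>0$ is \emph{equivalent} to the existence of some $t\in[0,T]$ with $g'(t)<0$. Since $g(0)=f(x_0)$, a single point of the ray at which $f$ falls below $f(x_0)$ suffices: if $g(t_1)<g(0)$ for some $t_1\in(0,T]$, the mean value theorem gives $t^\ast\in(0,t_1)$ with $g'(t^\ast)=(g(t_1)-g(0))/t_1<0$, whence $u_d\ge -g'(t^\ast)>0$. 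The theorem thus reduces to showing that, for every $x\in B_{\text{fill}}$, the ray from $x_0$ toward $x$ meets $\{f<f(x_0)\}$; the cleanest sufficient condition is simply $f(x)<f(x_0)$.

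Next I would pin down $x_{\text{fill}}$ itself. The map $s\mapsto\min\{0,s\}^3$ agrees to second order with $0$ at $s=0$, so $H$ in \eqref{filled2} is $C^2$ and $x_{\text{fill}}$ is a stationary point. Split the domain into $\{f\ge f(x_0)\}$, where $H=-a\|x-x_0\|^2$, and $\{f<f(x_0)\}$, where $H=-a\|x-x_0\|^2+(f(x)-f(x_0))^3$. On the first set $H$ is strictly concave with $x_0$ as its only stationary point, and since $H\le 0$ everywhere with equality only at $x_0$, this point is a global maximizer, not a minimizer. Therefore any local minimizer $x_{\text{fill}}\neq x_0$ lies in the interior of $\{f<f(x_0)\}$, giving the anchor $f(x_{\text{fill}})<f(x_0)$; differentiating at $x_{\text{fill}}$ yields $2a(x_{\text{fill}}-x_0)=3\big(f(x_{\text{fill}})-f(x_0)\big)^2\nabla f(x_{\text{fill}})$, i.e. $x_{\text{fill}}-x_0$ is parallel to $\nabla f(x_{\text{fill}})$, a relation I would reuse when discussing promising directions.

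The main step, and the hard part, is to propagate this from $x_{\text{fill}}$ to the whole basin: I would show $B_{\text{fill}}\subseteq\{f<f(x_0)\}$. Viewing $B_{\text{fill}}$ as the set of points whose $-\nabla H$ trajectory converges to $x_{\text{fill}}$, the obstacle is to control crossings of the interface $\Gamma=\{f=f(x_0)\}$. On $\Gamma\setminus\{x_0\}$ one has $\nabla H=-2a(x-x_0)$, so $-\nabla H$ is purely radial and its sign against the inward normal $-\nabla f$ is governed by $(x-x_0)^\intercal\nabla f(x)$; combining this with the local convexity of $f$ near each optimum and the convexity of the attraction basins of $f$ (assumptions~(2)--(3) preceding Theorem~\ref{reward}), I would argue that no trajectory converging to $x_{\text{fill}}$ can cross $\Gamma$ out of the connected component $\Omega_0\subseteq\{f<f(x_0)\}$ containing $x_{\text{fill}}$, so $B_{\text{fill}}\subseteq\Omega_0$. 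Establishing this invariance rigorously, rather than the one-dimensional reduction or the anchoring at $x_{\text{fill}}$, is where the real work lies. As a fallback that sidesteps the containment claim, if the search range $T$ in the score is taken (as in Alg.~\ref{alg:distance determination}, Alg.~\ref{accelerate}, and the definition before Theorem~\ref{reward}) to extend along $d$ until a point with $f\le f(x_0)$ is reached, then the first paragraph already forces $u_d>0$ even for basin points lying in $\{f\ge f(x_0)\}$. Either route, combined with the reduction, completes the proof.
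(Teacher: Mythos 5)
Your opening reduction is exactly the right one and matches the paper's last line: $u_d>0$ follows once some $t>0$ satisfies $f(x_0+td)<f(x_0)$ (the paper invokes Theorem~\ref{reward} for this; your mean-value-theorem version is fine and even a little more economical). The problem is everything after that. You aim at the containment $B_{\text{fill}}\subseteq\{f<f(x_0)\}$, which is strictly stronger than what the theorem needs, and you explicitly concede that the invariance argument for the interface $\Gamma=\{f=f(x_0)\}$ is ``where the real work lies'' without carrying it out --- so the central step of your proof is missing. The fallback does not rescue this: taking $T$ to be defined by ``extend along $d$ until a point with $f\le f(x_0)$ is reached'' presupposes that such a point exists on the ray, which is precisely the assertion in question, so that route is circular.

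The missing idea is a per-ray contradiction argument that sidesteps any global control of the basin. Fix $x\in B_{\text{fill}}$ and $d=x-x_0$, and suppose $f(x_0+td)\ge f(x_0)$ for \emph{all} $t>0$. Then the cubic term in Eq.~\ref{filled2} vanishes identically on the ray (together with its gradient, since $\min\{0,s\}^3$ has zero derivative for $s\ge 0$), so $\nabla H(x_0+td)=-2at\,d$ is purely radial at every point of the ray. Consequently the gradient-descent trajectory of $H$ started at $x$ is confined to the ray and moves monotonically away from $x_0$, so it cannot converge to $x_{\text{fill}}$ --- contradicting $x\in B_{\text{fill}}$. Hence some $t$ with $f(x_0+td)<f(x_0)$ exists, and your first paragraph finishes the proof. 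Ironically, you already computed the key fact ($\nabla H=-2a(x-x_0)$ wherever $f\ge f(x_0)$) in your third paragraph, but you deployed it only on the level set $\Gamma$ in service of the unproved containment, rather than on the whole ray where it closes the argument. Your second paragraph (locating $x_{\text{fill}}$ in $\{f<f(x_0)\}$) is correct but unnecessary for the theorem as stated.
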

\begin{proof}
We first prove that $\forall x \in B_{\text{fill}}, d=x-x_0$, $\exists  t\in \mathbb{R}$, s.t. $ f(x_0+t\cdot d)<f(x_0)$. This can be done by contradiction. If there is no such $t$, then \begin{equation}
\nabla H(x_0+t\cdot d)=-2at\cdot d.\label{ed}\end{equation}This is because that $\forall t \in \mathbb{R}_{++}$, we have $f(x_0+t\cdot d)\geq f(x_0)$, $H(x)$ degenerates to $-a\|t\cdot d\|^2$. Eq.~\ref{ed} implies that apply gradient descent from $x$ on $H(x) $ will not lead to a point in $B_{\text{fill}}$. This contradicts our assumption that $x \in B_{\text{fill}}$.

The existence of $t$ implies $u_{d} > 0$ by Theorem~\ref{reward}.
\end{proof}
\section{Experiment Results}\label{experiments}

In this section, we study the numerical performance of Gd-net, the escaping policies, and L$_2$GO.

\subsection{Model-driven Local Search}

This section investigates the performance of Gd-Net. In the experiments, 50 d-Net blocks are used. Parameters of these blocks are the same.

\textbf{Training.} d-Net is trained through minimizing the Monte Carlo approximation to the loss functions as defined in Eq.~\ref{loss}, in which a sample of the Gaussian family ${\cal F}_{\cal G}$ is used. In the experiments, we use ten 2-d Gaussian functions with positive covariance matrix as the training functions. d-Net is trained on 25 initial points sampled uniformly at random for each training function. At each layer of d-Net, the step size $\alpha_k$ is obtained by exact line search in [0,1]~\footnote{Note that taking $\alpha_k \in (0,1]$ is not necessarily the best choice for line search. It is rather considered as a rule of thumb. Notice that limiting the search of $\alpha_k$ in (0,1] could make Gd-Net be scale-variant. We transform $f(x)=f(x)/f(x_0)$ in order to eliminate the scaling problem where $x_0$ is the initial point when testing.}. Gradient descent is used to optimize Eq.~\ref{loss} with a learning rate 0.1 for 100 epochs. The same training configuration is used in the following. 

\textbf{Testing.} We use functions sampled from ${\cal F}_{\cal G}$ in 5-d, and $\chi^2$-functions\footnote{The $\chi^2$-function is of the following form
\begin{equation*}
f(x)=\frac{x^{k/2-1}e^{-x/2}}{2^{k/2}\Gamma(k/2)}, x>0
\end{equation*}where $\Gamma(\cdot)$ is the gamma function and $k$ is a parameter. } in 2-d to test Gd-Net. Note that Gd-Net is trained on 2-d ${\cal F}_{\cal G}$. By testing on 5-d Gaussian functions, we can see its generalization ability on higher-dimensional functions. The testing on $\chi^2$ functions can check the generalization ability of Gd-Net on functions with non-symmetric contour different to Gaussians. Fig.~\ref{contour} shows the difference between Gaussian and $\chi^2$ contour.
\begin{figure}
\centering\includegraphics[width = \columnwidth]{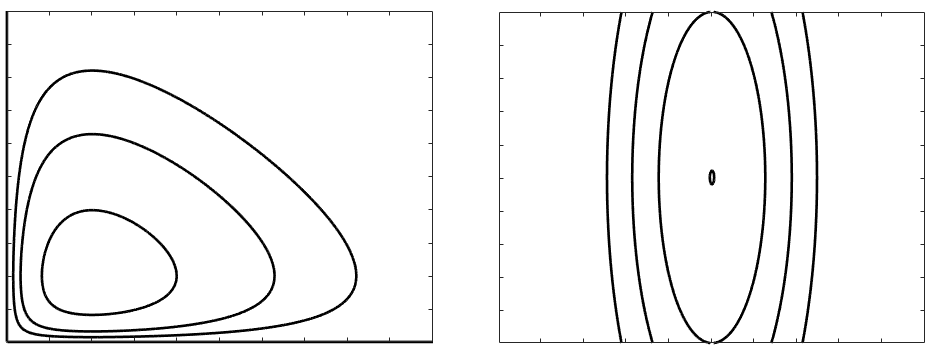}
\caption{A demo on the difference between a Gaussian contour and a $\chi^2$ contour.}\label{contour}
\end{figure}

Fig.~\ref{highdim} shows the testing result of the learned Gd-Net on optimizing a 5-d Gaussian function with different initial points. The test on a $\chi^2$ function is shown in Fig.~\ref{chi}. In these figures, first-order and second-order optimization algorithms, including steepest gradient descent, conjugate descent and BFGS, are used for comparison. From the figures, it is clear that Gd-Net requires much less iterations to reach the minimum than the compared algorithms.  


Further, we observed that unlike BFGS, where a positive-definite Hessian matrix is a must, Gd-Net can cope with ill-conditioned Hessians. Fig.~\ref{mutiblock} shows the results on a 2-d Gaussian function with ill-posed Hessian. For an initial point that is far away from a minimizer, its Hessian is nearly singular which implies that the search area is rather flat. From the left plot of Fig.~\ref{mutiblock}, it is seen that Gd-Net gradually decreases, while the other methods fail to make any progress. On the right plot, it is seen that Gd-Net finally progresses out the flat area and the criterion starts decreasing quickly.

\begin{figure}
\centering\includegraphics[width = \columnwidth]{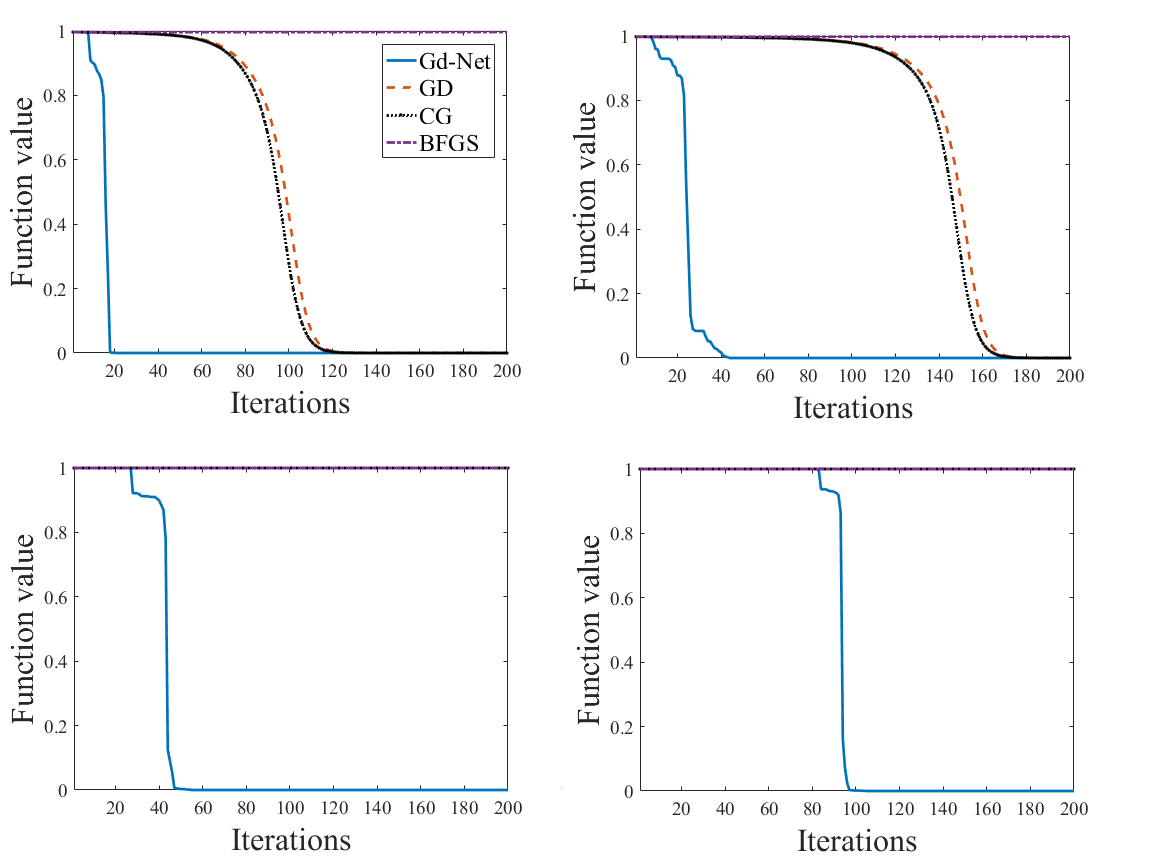}
\caption{The optimization curve of the learned Gd-Net on a 5-d Gaussian function with various initial points.}\label{highdim}
\end{figure}

\begin{figure}
\centering\includegraphics[width = \columnwidth]{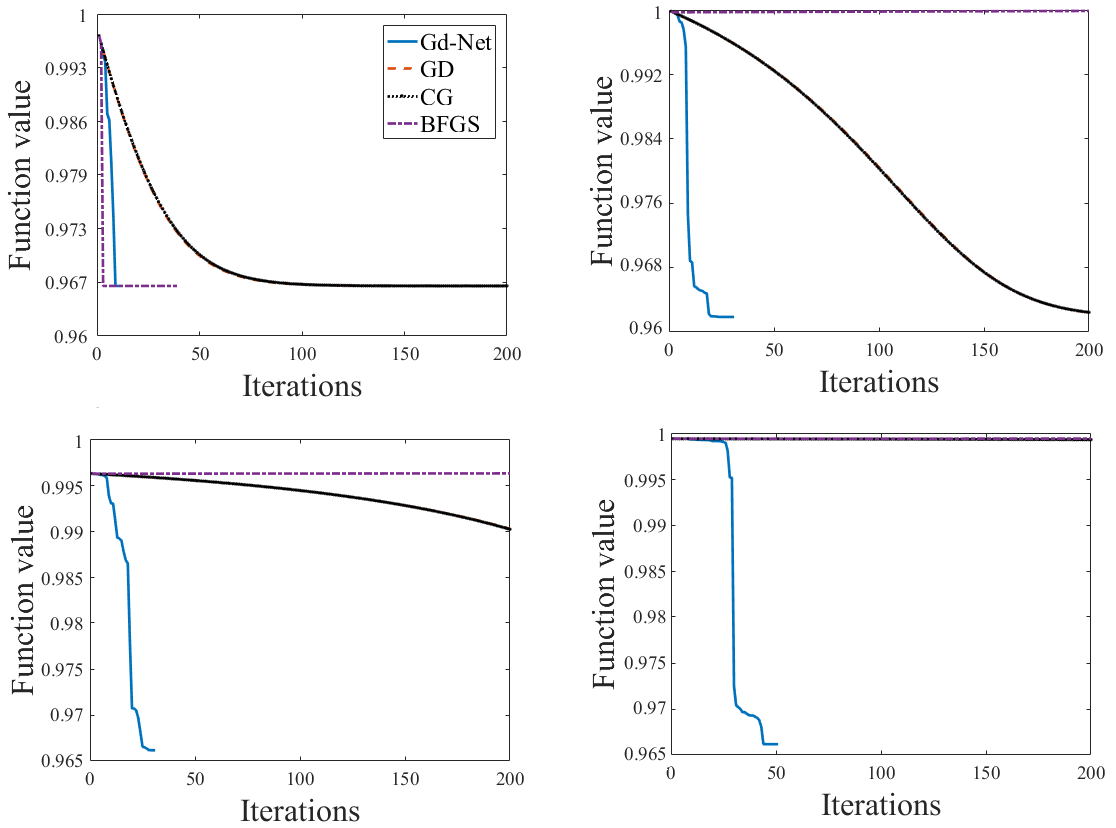}
\caption{The optimization curve of the learned Gd-Net on a 2-d $\chi^2$ function with two different initial points.}\label{chi}
\end{figure}

\begin{figure}
\centering
\includegraphics[width = \columnwidth]{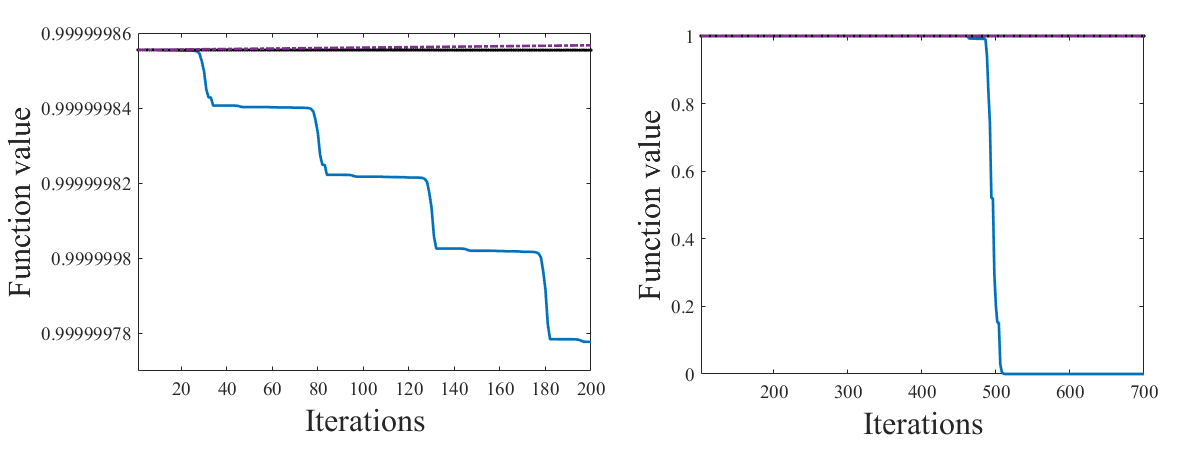}
\caption{The optimization procedure of Gd-Net (with 5 blocks) on a 5-d Gaussian function with an initial point far away from the optimum. The left plot shows the decreasing curve obtained by the first 4 blocks, while the right shows the curve of the rest block.}\label{mutiblock}
\end{figure}

\subsection{The fixed escaping policy}

In this section, controlled experiments are carried out to justify the ability of the fixed policy. We first consider a low-dimensional non-convex optimization problem with two local minimizers, then a high-dimensional highly non-convex problems with many local minimizers. The fixed policy is compared with random sampling on these test problems.

\subsubsection{Mixture of Gaussians functions}

Consider the following mixture of Gaussians functions
\begin{eqnarray}\label{mix_gauss}
f(x)&=&- \sum_{i=1}^m c_i\exp\left\{-(x-\mu_i)^\intercal{\Sigma_i}^{-1}(x-\mu_i)\right\}
\end{eqnarray}where $x \in \mathbb{R}^n, c_i > 0$ and $\Sigma_i \succeq 0$. The mixture of Gaussian functions have $m$ local minimizers at $\mu_i$'s. 


In the experiments, we set $m=2,3$. To test the ability of the escaping scheme, we assume the escaping starts from a local minimizer. We test on dimension $n=2, 3, 5, 8, 10$. The other algorithmic parameters are $N_0 = 2, 3, 5, 8, 10$, $P=15,20,50,100,250$ for $n=2, 3, 5, 8, 10,$ respectively, and $\sigma=0.1,\delta_0=0.2,N=20$. 

Table~\ref{sampleno} shows the average number of samplings used to escape from local optimum and the standard deviation (in brackets) over 500 runs obtained by using the fixed policy and the random sampling method. 

From the table, it is observed that the fixed policy requires less samples than that of the random sampling, and the standard deviation is smaller. The $p$-value obtained by applying the rank sum hypothesis test at $5\%$ significance level is shown in the last column. The hypothesis test suggests that the fixed policy outperforms the random sampling approach significantly (the p-value is less than $0.05$).
\begin{table}[h]
\caption{The number of samplings used to escape.}\label{sampleno}
\centering
\begin{tabular}{llllr}
\toprule
$n$ &	$m$ & random search & fixed policy & $p$-value \\
\midrule
2 & 2& $4.25(3.51)$ & $\textbf{3.03(2.96)}$ & 0.00003\\
3 &2& $6.22(5.42)$ & $\textbf{4.75(4.55)}$ & 0.00003\\
5 & 2&$12.38(13.07)$ & $\textbf{9.18(8.27)}$ & 0.00002\\
8 & 2&$45.6(31.46)$ & $\textbf{39.5(30.12)}$ & 0.005\\
10& 2&$110.65(78.89)$ & $\textbf{99.6(74.78)}$ & 0.032\\
2 & 3&	$6.72(2.12)$ & $\textbf{5.53(0.96)}$ & 0.00003\\
\bottomrule
\end{tabular}
\end{table}

\subsubsection{Deep neural network}

The loss function of a deep neural network has many local optimizers. We take the training of a deep neural network for image classification on CIFAR-10 as an example. For CIFAR-10, an 8-layer convolution neural network similar to Le-Net~\cite{lecun1998gradient}, with 2520-d parameters, is applied. The cross entropy is used as the loss function.

The number of local minimizers found by a method is used as the metric of comparison. Given a maximal number of attempts $P$,  a larger number of local minimizers indicates a higher probability of escaping local minimum, and hence a better performance. For CIFAR-10, ADAM~\cite{Kingma2014Adam} with mini-batch stochastic gradient is applied in the minimization phase. 




Note that existing filled functions often involve $f(x)$. This usually makes the application of mini-batch stochastic gradient method difficult if $f(x)$ is not sum of sub-functions. Instead, the auxiliary function $\widetilde{H}(x)$ used in this paper does not involve $f(x)$. Fig.~\ref{minibatch} shows the scores (cf. Eq.~\ref{scores}) against the distance to current local optimum with different mini-batch sizes. From the figure, it is seen that with different batch-size, the scores exhibit similar behavior. This shows the applicability of the proposed escaping method to stochastic-based local search algorithms. In the experiment, the parameters to apply Alg.~\ref{alg:distance determination} is set as $N_0=300$, $P=1000,N=10,\sigma=0.01, a=1$ and $\delta_0=0.5$.

In the following, the effective samplings\footnote{A sampling is effective if the sampled direction is with positive score.} in 1000 samplings are used to compare the proposed escaping policy against the random sampling. The obtained promising directions with different thresholds in 500 runs are summarized in Table~\ref{promising direction CNN}. It is clear that the proposed escaping policy is able to find more samples with positive scores than that of the random sampling.
\begin{table}[h]
\caption{The number of effective samplings in 1000 samplings.}\label{promising direction CNN}
\centering
\begin{tabular}{llllr}
\toprule
 & score$>$0 & score$>$0.01 & score$>$0.03 & score$>$0.05\\
\midrule
random sampling & 52 &  19  & 2 & 0\\
fixed policy &   423 &  201 & 31& 8\\
\bottomrule
\end{tabular}
\end{table}
\begin{figure}
\includegraphics[width = 0.8\columnwidth]{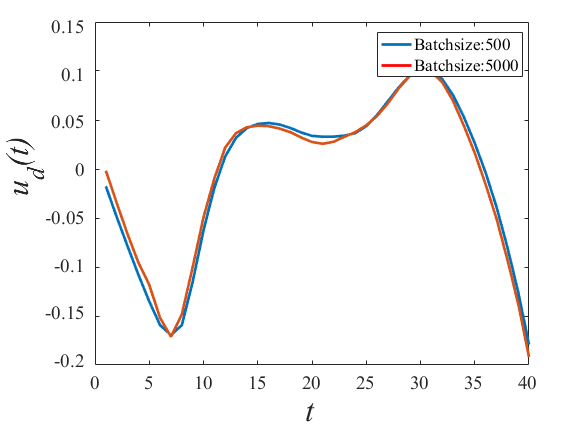}\centering
\caption{The curves of the score against the step size w.r.t. two mini-batches when training a convolution neural network for CIFAR-10.}\label{minibatch}
\end{figure}

\subsection{The fixed policy vs. the learned policy}

In this section we show the effectiveness of the learned policy. The policy function $\phi(s_t;\theta)$ in Eq.~\ref{policy_form} is set as a 3 layer network with sigmoid as hidden layer activation function, and a fully-connected output layer with linear activation function.

\textbf{Training.} A single Gaussian mixture function with two local minimizers is used for training the policy network in 2-d and 5-d, respectively. The other parameters are set $N_0=2 (5);P=15 (50); N_T=20 (50)$ and $E=30$ for 2 (5)-d. The number of hidden layer units is 5 and 200 for 2-d and 5-d, respectively. 

\textbf{Testing.} To test the learned policy, we also use the Gaussian mixture functions with two local minimizers in 2-d and 5-d. Here we set $m=2, c_1=c_2=1$. Table~\ref{rl_sample} shows the average number of samplings used to escape from local optimum and the standard deviation (in brackets) over 500 runs obtained by using the fixed policy, the learned policy and the random sampling. Detailed configurations of the functions used for training can be found in Appendix D.

\begin{table}[h]
\caption{The average number of samplings used to find the promising direction in different settings. }\label{rl_sample}
\centering
\begin{tabular}{llllr}
\toprule
 & n & random search & learned policy & fixed policy \\
\midrule
1 & 2	& 	$4.25(3.51)$ & $\textbf{2.67(1.77)}$ & 3.03(2.96)\\
2 & 2 	& 	$4.81(4)$ & $\textbf{3.07(2.05)}$ & 3.52(13.27)\\
3 & 2	&	$7.03(5.45)$ & $\textbf{4.12(2.63)}$ & 5.5(4.87)\\
4 & 2	& 	$5.69(4.51)$ & $\textbf{3.37(2.23)}$ & 5.05(4.50)\\
5 & 5	&	$12.76(10.77)$ & $\textbf{5.78(5.21)}$ & 10.22(9.35)\\
6 & 5	&	$25.32(16.21)$ & $\textbf{13.08(11.96)}$ & 21.65(15.81)\\
7 & 5	&	$16.16(13.15)$ & $\textbf{8.11(7.55)}$ & 11.68(10.61)\\
8 & 5	&	$7.25(6.30)$ & $\textbf{4.01(4.32)}$ & 4.39(4.38)\\
9 & 5	&	$11(10.29)$ & $\textbf{7.28(7.69)}$ & 8.48(7.95)\\
\bottomrule
\end{tabular}
\end{table}

From the table, it is seen clearly that the learned policy requires less samplings to reach new local optimum. To observe the behaviors of the compared escaping policies better, Fig.~\ref{percentage} shows the histograms of the number of effective samplings for a 2-D function. From the figure, we see that the fixed policy is mostly likely to escape the current local optimum in one sampling, but it also is highly possible to require more samplings. That is, the number of effective samplings by the fixed policy follows a heavy-tail distribution. For the learned policy, the effective sampling numbers are mostly concentrated in the first 7 samplings. This shows that the learned policy is more robust than the other policies, which can also be confirmed in Table~\ref{rl_sample} by the standard deviations. We may thus conclude that the learned policy is more efficient than the fixed policy and random sampling.

\begin{figure}
\centering\includegraphics[width = 0.9\columnwidth]{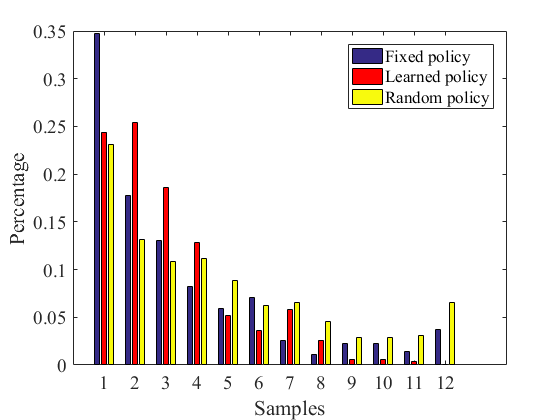}\centering
\caption{The histogram of the number of effective samplings for a 2-D Gaussian mixture function by the fixed policy, the learned policy and the random sampling policy. }\label{percentage}
\end{figure}

\subsection{The global search ability}

In this section, we study the global search ability of L$_2$GO in comparison with the filled function method proposed in~\cite{Ge1987A}. Three examples, including the three hump function, robust regression and neural network classifier, are used as benchmarks.

\subsubsection{Three-hump function}

The function is defined as \[f(x)=2x_1^2-1.05x_1^4+x_1^6/6-x_1 x_2+x_2^2.\]It has three local minimizers at $[-1.73,-0.87]^\intercal$, $[0,0]^\intercal$, and $[1.73,0.87]^\intercal$. The global minimizer is at $[0,0]^\intercal$. In our test, the algorithm parameters are set as $N_0 = 2, P=15,\sigma=0.1,\delta=0.2$ and $N=20$. We run L$_2$GO 20 times with different initial points. Fig.~\ref{three_hump} shows the averaged optimization process of L$_2$GO. From the figure, it is seen that L$_2$GO is able to reach the local minimizer one by one. It is also seen that Gd-Net performs better than BFGS and steepest descent. 

\begin{figure}
\centering\includegraphics[width = 0.9\columnwidth]{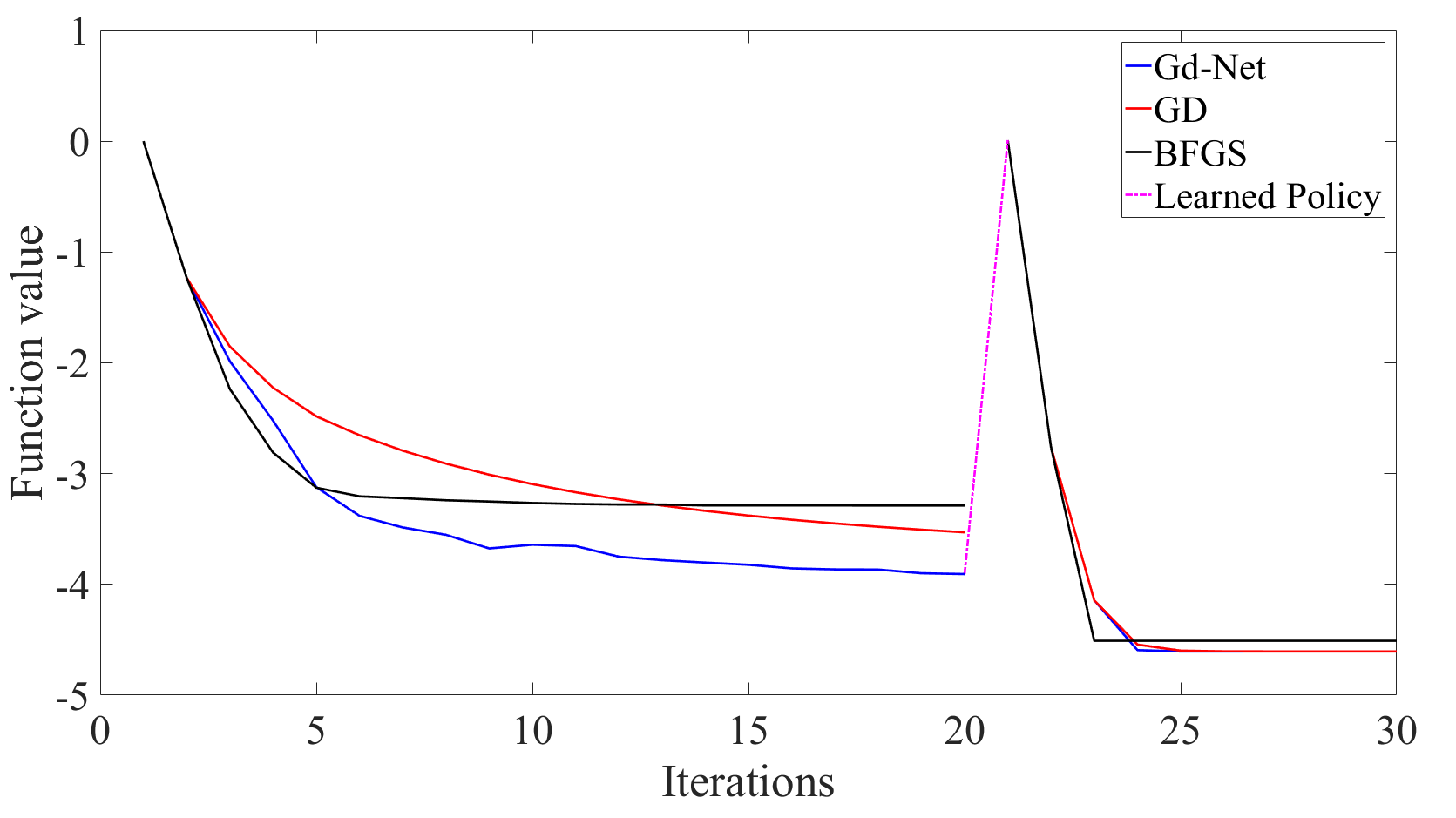}
\caption{The running procedure of L$_2$GO, in which GD, BFGS and Gd-Net are used as local search, while the learned policy is used to escape from local minimum (represented in pink dotted line). }\label{three_hump}
\end{figure}

Table~\ref{sample_three} shows the number of effective samplings when using the fixed policy, random sampling and learned policy as escaping scheme. It is seen that the filled function method has failed due to the existence of the saddle point as shown in Fig.~\ref{filled}.
\begin{table}[h]
\caption{The number of samplings used to find the promising directions for the three-hump function.}\label{sample_three}
\centering
\begin{tabular}{lllr}
\toprule
 random sampling & learned policy & fixed policy & filled function\\
\midrule
$5.35(4.38)$ & $\textbf{3.29(2.07)}$ & 3.76(3.26) & NA\\
\bottomrule
\end{tabular}
\end{table}


\subsubsection{Robust regression}

For the robust linear regression problem~\cite{li2016learning}, a popular choice of the loss function is the Geman-McClure estimator, which can be written as follows:
\begin{equation}\label{1}
\mathbf{P}:\min\limits_{w,b}f(w,b)=  \frac{1}{n}\sum_{i=1}^n\frac{(y_i-w^\intercal x_i-b)^2}{(y_i-w^\intercal x_i-b)^2+c^2}
\end{equation}where $w\in \mathbb{R}^d, b\in \mathbb{R}$ represent the weights and biases, respectively. $x_i\in \mathbb{R}^d$ and $y_i\in \mathbb{R}$ is the feature vector and label of the $i$-th instance and $c\in \mathbb{R}$ is a constant that modulates the shape of the loss function.

The landscape of the robust regression problem can be systematically controlled. Specifically, we can decide the number of local minimizers, their locations and criteria readily. Note that given $\{w, b\}$, the training data can be created by \begin{equation}\label{robust}y_i = w^\intercal x_i + b_i + \epsilon\end{equation} Different $\{w,b\}$ indicates different local minimum.

In our experiments, we randomly sample 50 points of $x_j \sim {\cal N}(0, \mathbb{I})$ in $\mathbb{R}^2$, and divide them to two sets $\mathcal{S}_1 = \{x_j, 1\leq i\leq 10\}$ and $\mathcal{S}_2 = \{x_j, 11\leq i\leq 50\}$. For each set $\mathcal{S}_i, 1\leq i\leq 2$, give a $\{w_i, b_i\}$, apply $y_j = w_i^\intercal x_j + b_i + \epsilon$, a training set ${\cal T}_i$ can be obtained. Combining them, we obtain the whole data set ${\cal T} = \cup {\cal T}_i$. Given this training set, it is known that the objective function has two obvious local minimizers at $(w_1,b_1)$ and $(w_2,b_2)$ and lots of other local minimizers. Please see Fig.~\ref{ro_contour} for contour of the robust regression function with $w_1 = (-8, -8)$, $w_2 = (5,5)$ and $b_1=b_2 = 0$. There are two main local minimizers at $(w_1,b_1)$ and $(w_2,b_2)$ with $f(w_2, b_2) < f(w_1, b_1)$), and many other local minimizers.

\begin{figure}
\centering\includegraphics[width = 0.9\columnwidth]{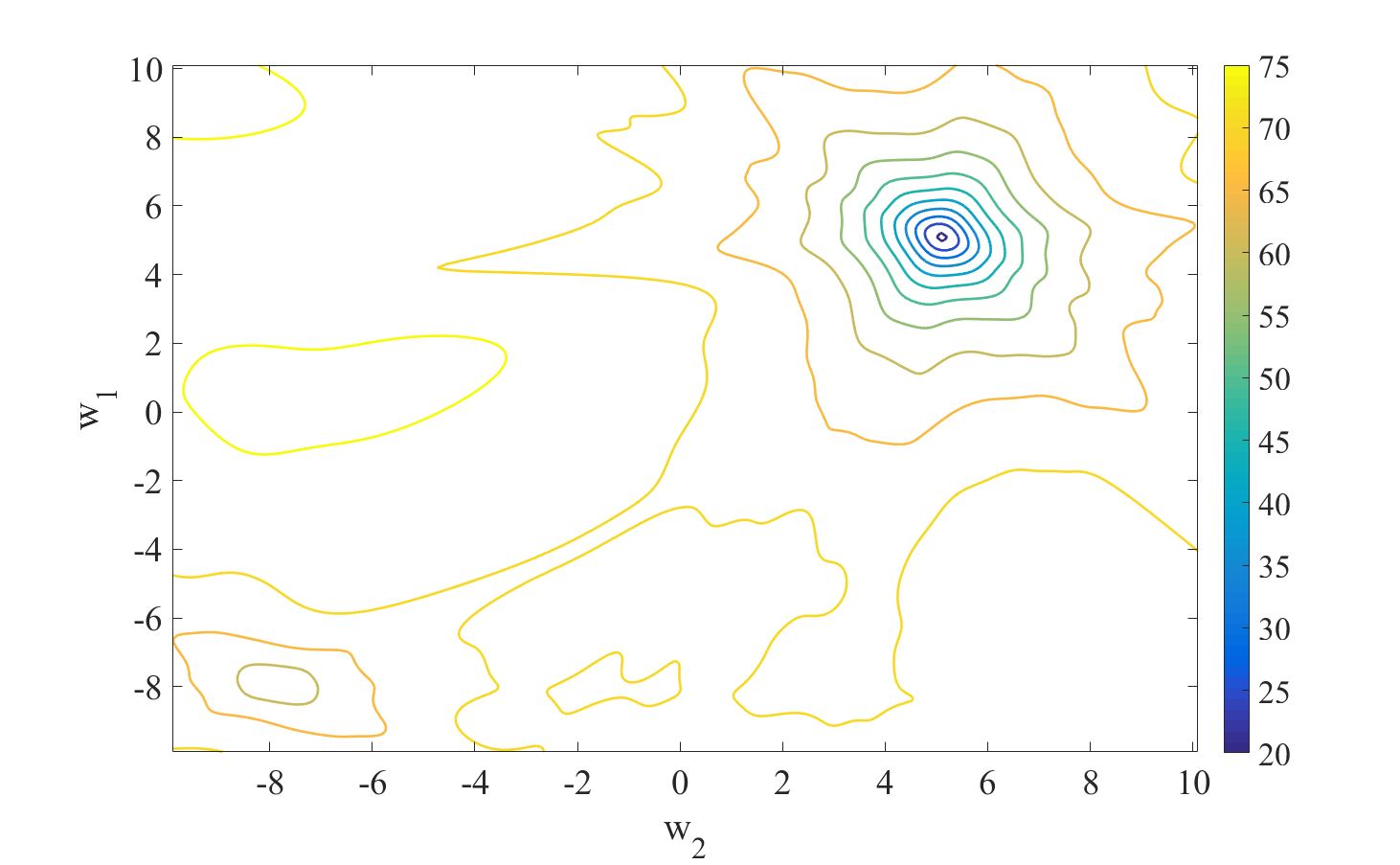}
\caption{The contour of the robust regression function with $w_1=(-8,-8), w_2 = (5,5)$ and $b_1=b_2 = 0$, respectively. }\label{ro_contour}
\end{figure}

Fig.~\ref{ro_reg} shows the optimization curve of the robust regression function, in which Gd-Net and GD are compared. We notice that BFGS is not convergent in this case since landscape here is vary flat. From the figure, we can see that for robust regression function, Gd-Net also performs better than GD. Table~\ref{sample_ro} shows the average numbers of effective samplings obtained by the compared policies. From the table it is clear that the learned policy performs the best, while the filled function method needs much more times. 

\begin{figure}
\includegraphics[width = 0.9\columnwidth]{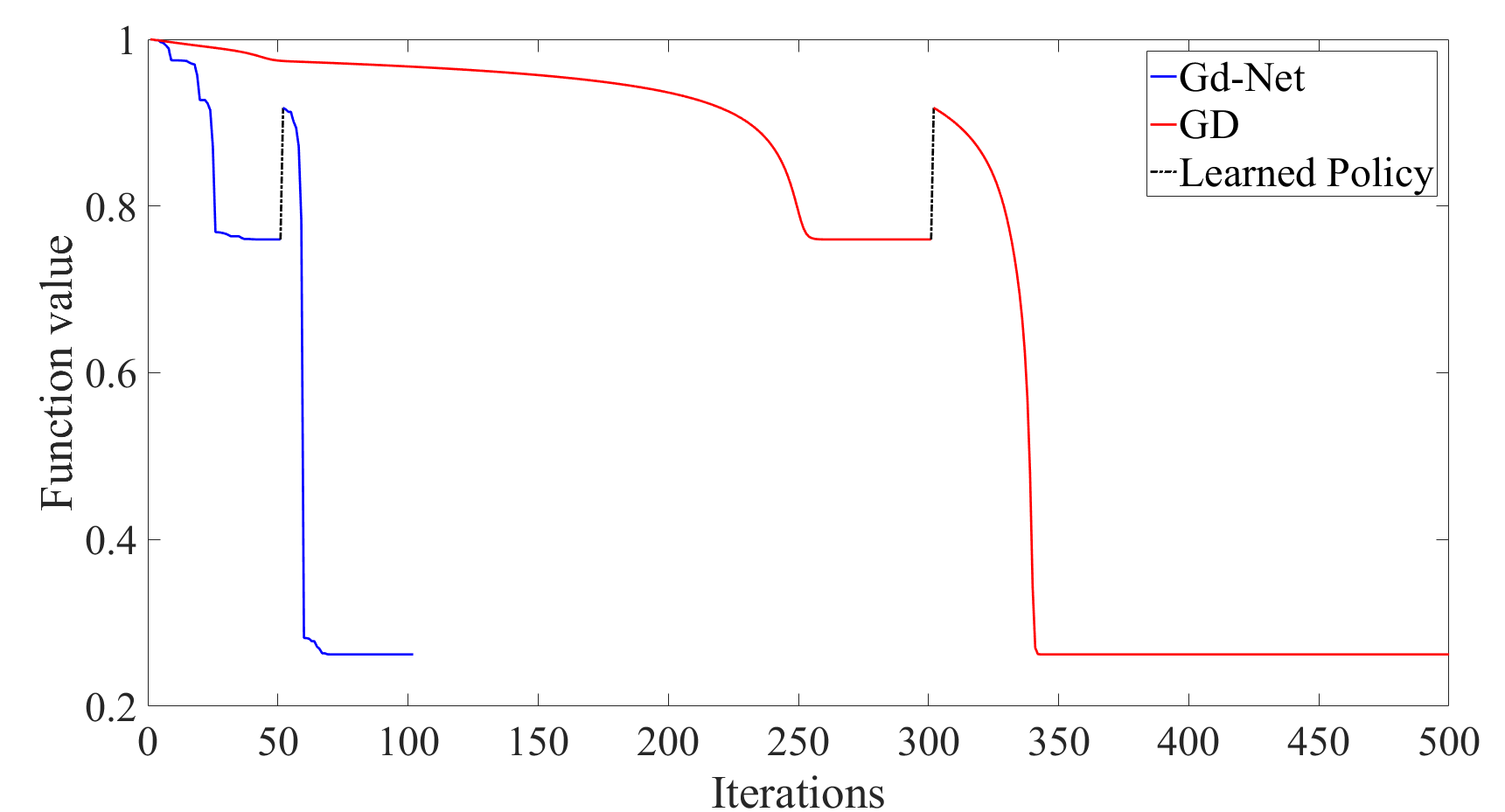}\centering
\caption{The optimization procedure of L$_2$GO and the steepest gradient with the learned policy on robust regression.}\label{ro_reg}
\end{figure}

\begin{table}[h]
\caption{The number of samplings used to find the promising directions for robust regression function.}\label{sample_ro}
\centering
\begin{tabular}{lllr}
\toprule
 random sampling & fixed policy & learned policy & filled function\\
\midrule
$13.05(9.48)$ & $\textbf{11.3(8.83)}$ & 12.23(9.27) & 89(20.22) \\
\bottomrule
\end{tabular}

\end{table}



\subsubsection{neural network classifier}

We construct a small network with one hidden layer for a classification problem in 2-d~\cite{li2016learning}. The number of hidden layer is one, and the total dimension of network is 5. The goal is to classify $\mathcal{S}_i=\{x_i+\varepsilon|\varepsilon \sim \mathcal{N}(0,\sigma^2)\}, i=1,2$ into two classes, where $x_1 \neq x_2 \in \mathbb{R}^2$. The cross entropy is used as the loss function. ADAM is compared with L$_2$GO. In ADAM, the learning rate is 0.001, and the hyper-parameters for momentum estimation are $0.9$ and $0.9$.

Fig.~\ref{class_small} shows the optimization curve. From the figure, we see that L$_2$GO performs better than ADAM. It can escape from the local optimum, and reach a better optimum successfully.
\begin{figure}
\includegraphics[width = 0.9\columnwidth]{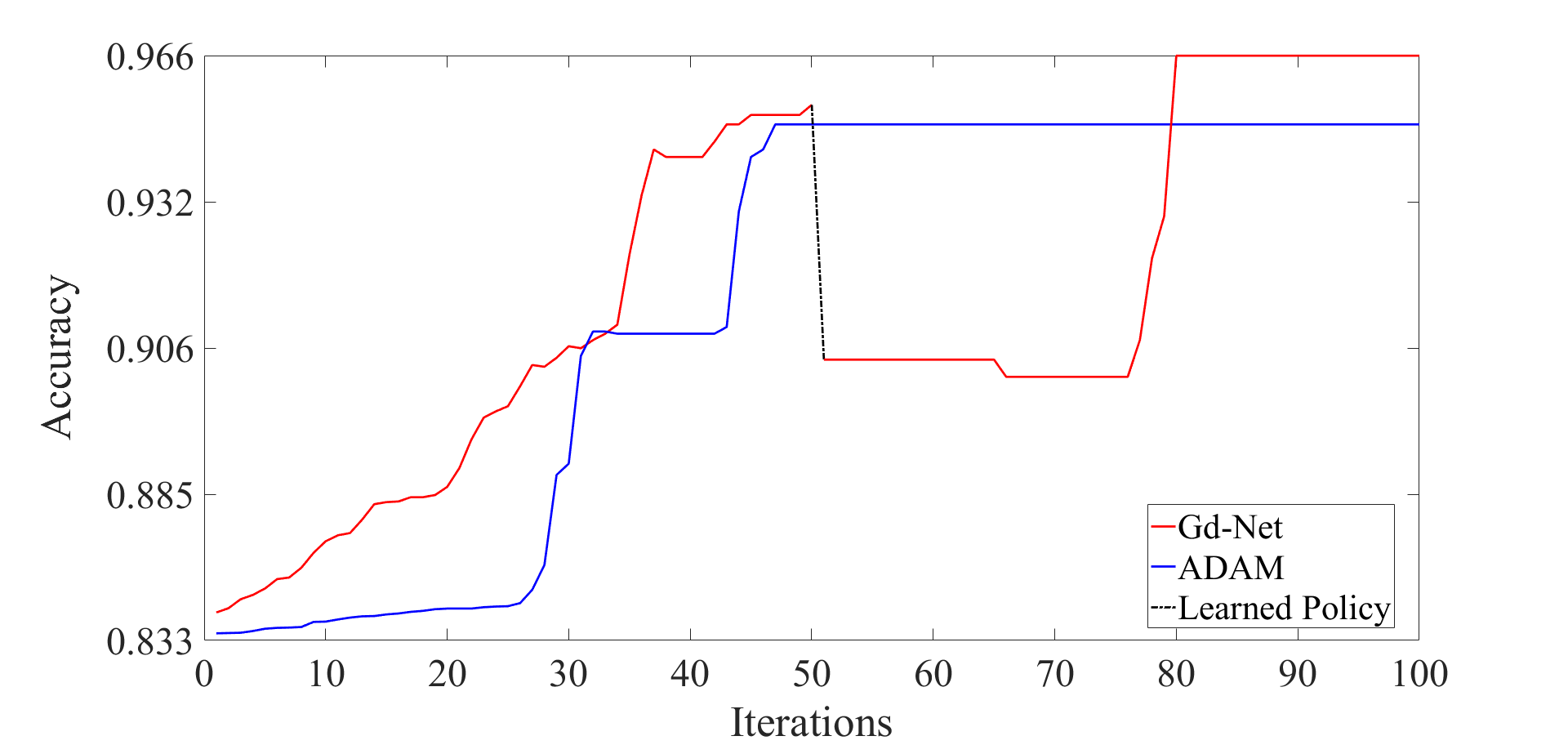}\centering
\caption{The optimization curve of L$_2$GO and ADAM on training the classification network. }\label{class_small}
\end{figure}

\section{Conclusion and Future Work}\label{discussion}

This paper proposed a two-phase global optimization algorithm for smooth non-convex function. In the minimization phase, a local optimization algorithm, called Gd-Net, was obtained by the model-driven learning approach. The method was established by learning the parameters of a non-linear combination of different descent directions through deep neural network training on a class of Gaussian family function. In the escaping phase, a fixed escaping policy was first developed based on the modeling of the escaping phase as an MDP. We further proposed to learn the escaping policy by policy gradient.

A series of experiments have been carried out. First, controlled experimental results showed that Gd-Net performs better than classical algorithms such as steepest gradient descent, conjugate descent and BFGS on locally convex functions. The generalization ability of the learned algorithm was also verified on higher dimensional functions and on functions with contour different to the Gaussian family function. Second, experimental results showed that the fixed policy was more able to find promising solutions than random sampling, while the learned policy performed better than the fixed policy. Third, the proposed two-phase global algorithm, L$_2$GO, showed its effectiveness on a benchmark function and two machine learning problems.

In the future,  we plan to work on the following avenues. First, since the Hessian matrix is used in Gd-Net, it is thus not readily applicable to high-dimensional functions. Research on learning to learn approach for high-dimensional functions is appealing. Second, we found that learning the escaping policy is particularly difficult for high-dimensional functions. It is thus necessary to develop a better learning approach. Third, the two-phase approach is not the only way for global optimization. We intend to develop learning to learn approaches based on other global methods, such as branch and bound~\cite{Lawler1966Branch}, and for other types of optimization problems such as non-smooth, non-convex and non-derivative.

\bibliographystyle{IEEEtran}
\bibliography{ref}

\begin{thebibliography}{10}
\providecommand{\url}[1]{#1}
\csname url@samestyle\endcsname
\providecommand{\newblock}{\relax}
\providecommand{\bibinfo}[2]{#2}
\providecommand{\BIBentrySTDinterwordspacing}{\spaceskip=0pt\relax}
\providecommand{\BIBentryALTinterwordstretchfactor}{4}
\providecommand{\BIBentryALTinterwordspacing}{\spaceskip=\fontdimen2\font plus
\BIBentryALTinterwordstretchfactor\fontdimen3\font minus
  \fontdimen4\font\relax}
\providecommand{\BIBforeignlanguage}[2]{{%
\expandafter\ifx\csname l@#1\endcsname\relax
\typeout{** WARNING: IEEEtran.bst: No hyphenation pattern has been}%
\typeout{** loaded for the language `#1'. Using the pattern for}%
\typeout{** the default language instead.}%
\else
\language=\csname l@#1\endcsname
\fi
#2}}
\providecommand{\BIBdecl}{\relax}
\BIBdecl

\bibitem{dixon75}
L.~Dixon and G.~Szeg\"{o}, \emph{Towards Global Optimization}.\hskip 1em plus
  0.5em minus 0.4em\relax New York: Elsevier, 1975.

\bibitem{horst}
R.~Horst and P.~Pardalos, Eds., \emph{Handbook of Global Optimization}.\hskip
  1em plus 0.5em minus 0.4em\relax Dordrecht: Kluwer, 1995.

\bibitem{globalopt_www}
\BIBentryALTinterwordspacing
(2019). [Online]. Available:
  \url{https://www.mat.univie.ac.at/~neum/glopt.html}
\BIBentrySTDinterwordspacing

\bibitem{pinter08}
J.~Pinter, ``Continuous global optimization: Applications,'' in
  \emph{Encyclopedia of Optimization}, C.~Floudas and P.~Pardalos, Eds.\hskip
  1em plus 0.5em minus 0.4em\relax Boston, M.A.: Springer, 2008.

\bibitem{Neumaier04}
A.~Neumaier, ``Convexification and global optimization in continuous global
  optimization and constraint satisfaction,'' in \emph{Acta Numerica},
  A.~Iserles, Ed.\hskip 1em plus 0.5em minus 0.4em\relax Cambridge University
  press, 2004.

\bibitem{gray97}
P.~Gary, W.~Hart, L.~Painton, C.~Phillips, M.~Trahan, and J.~Wagner, ``A survey
  of global optimization methods,'' 1997.

\bibitem{Levy1985The}
A.~Levy and S.~G\'{o}mez, ``The tunneling method applied to global
  optimization,'' in \emph{Numerical Optimization}, P.~Boggs, R.~Byrd, and
  R.~Schnabel, Eds.\hskip 1em plus 0.5em minus 0.4em\relax SIAM, 1985.

\bibitem{Ge1987A}
R.~P. Ge and Y.~F. Qin, ``A class of filled functions for finding global
  minimizers of a function of several variables,'' \emph{Journal of
  Optimization Theory and Applications}, vol.~54, no.~2, pp. 241--252, 1987.

\bibitem{boyd2004convex}
S.~Boyd and L.~Vandenberghe, \emph{Convex optimization}.\hskip 1em plus 0.5em
  minus 0.4em\relax Cambridge university press, 2004.

\bibitem{Fletcher1964Function}
R.~Fletcher and C.~M. Reeves, ``Function minimization by conjugate gradients,''
  \emph{Computer Journal}, vol.~7, no.~2, pp. 149--154, 1964.

\bibitem{wolpert2002no}
D.~H. Wolpert and W.~G. Macready, ``No free lunch theorems for optimization,''
  \emph{IEEE Transactions on Evolutionary Computation}, vol.~1, no.~1, pp.
  67--82, 1997.

\bibitem{Sun2006Optimization}
W.~Sun and Y.~Yuan, \emph{Optimization theory and methods: nonlinear
  programming}.\hskip 1em plus 0.5em minus 0.4em\relax Springer Science \&
  Business Media, 2006, vol.~1.

\bibitem{marquardt1963algorithm}
D.~W. Marquardt, ``An algorithm for least-squares estimation of nonlinear
  parameters,'' \emph{Journal of the society for Industrial and Applied
  Mathematics}, vol.~11, no.~2, pp. 431--441, 1963.

\bibitem{Polyak1964Some}
B.~T. Polyak, ``Some methods of speeding up the convergence of iteration
  methods,'' \emph{USSR Computational Mathematics and Mathematical Physics},
  vol.~4, no.~5, pp. 1--17, 1964.

\bibitem{Duchi2011Adaptive}
J.~Duchi, E.~Hazan, and Y.~Singer, ``Adaptive subgradient methods for online
  learning and stochastic optimization,'' \emph{Journal of Machine Learning
  Research}, vol.~12, no. Jul, pp. 2121--2159, 2011.

\bibitem{Zeiler2012ADADELTA}
M.~D. Zeiler, ``Adadelta: an adaptive learning rate method,'' \emph{arXiv
  preprint arXiv:1212.5701}, 2012.

\bibitem{Kingma2014Adam}
D.~P. Kingma and J.~Ba, ``Adam: A method for stochastic optimization,'' in
  \emph{ICLR}, 2015.

\bibitem{Andrychowicz2016Learning}
M.~Andrychowicz, M.~Denil, S.~Gomez, M.~W. Hoffman, D.~Pfau, T.~Schaul,
  B.~Shillingford, and N.~De~Freitas, ``Learning to learn by gradient descent
  by gradient descent,'' in \emph{NIPS}, 2016, pp. 3981--3989.

\bibitem{Hochreiter1997Long}
S.~Hochreiter and J.~Schmidhuber, ``Long short-term memory,'' \emph{Neural
  Computation}, vol.~9, no.~8, pp. 1735--1780, 1997.

\bibitem{li2016learning}
K.~Li and J.~Malik, ``Learning to optimize,'' in \emph{ICLR}, 2017.

\bibitem{chen2016learning}
Y.~Chen, Hoffman, M.~W, S.~G. Colmenarejo, M.~Denil, T.~P. Lillicrap, and
  N.~de~Freitas, ``Learning to learn without gradient descent by gradient
  descent,'' \emph{ICML}, 2017.

\bibitem{Xu2018Model}
Z.~Xu and J.~Sun, ``Model-driven deep-learning,'' \emph{National Science
  Review}, vol. v.5, no.~1, pp. 26--28, 2018.

\bibitem{sun2016deep}
J.~Sun, H.~Li, Z.~Xu \emph{et~al.}, ``Deep admm-net for compressive sensing
  mri,'' in \emph{NIPS}, 2016, pp. 10--18.

\bibitem{wang2018hyperadam}
S.~Wang, J.~Sun, and Z.~Xu, ``Hyperadam: A learnable task-adaptive adam for
  network training,'' in \emph{AAAI}, 2019.

\bibitem{lv2017learning}
K.~Lv, S.~Jiang, and J.~Li, ``Learning gradient descent: Better generalization
  and longer horizons,'' in \emph{ICML}, 2017, pp. 2247--2255.

\bibitem{levy85}
A.~Levy and A.~Montalvo, ``The tunneling algorithm for the global minimization
  of functions,'' \emph{SIAM Journal on Scientific and Statistical Computing},
  1985.

\bibitem{ytxu15}
Y.~Xu, Y.~Zhang, and S.~Wang, ``A modified tunneling function method for
  non-smooth global optimization and its application in artificial neural
  network,'' \emph{Applied Mathematical Modelling}, vol.~39, pp. 6348--6450,
  2015.

\bibitem{lin11}
H.~Lin, Y.~Wang, and L.~Fan, ``A filled function method with one parameter for
  unconstrained global optimization,'' \emph{Applied Mathematical Modelling},
  vol. 218, pp. 3776--3785, 2011.

\bibitem{zhang09}
Y.~Zhang, L.~Zhang, and Y.~Xu, ``New filled functions for non-smooth global
  optimization,'' \emph{Applied Mathematical Modelling}, vol.~33, no.~7, pp.
  3114--3129, 2009.

\bibitem{zhang2004new}
L.~Zhang, C.~Ng, D.~Li, and W.~Tian, ``A new filled function method for global
  optimization,'' \emph{Journal of Global optimization}, vol.~28, no.~1, pp.
  17--43, 2004.

\bibitem{Lan2010A}
S.~Ma, Y.~Yang, and H.~Liu, ``A parameter-free filled function for
  unconstrained global optimization,'' \emph{Applied Mathematics and
  Computation}, vol. 215, no.~10, pp. 3610--3619, 2010.

\bibitem{arulkumaran2017deep}
K.~Arulkumaran, M.~P. Deisenroth, M.~Brundage, and A.~A. Bharath, ``Deep
  reinforcement learning: A brief survey,'' \emph{IEEE Signal Processing
  Magazine}, vol.~34, no.~6, pp. 26--38, 2017.

\bibitem{silver2016mastering}
D.~Silver, A.~Huang, C.~J. Maddison, A.~Guez, L.~Sifre, G.~Van Den~Driessche,
  J.~Schrittwieser, I.~Antonoglou, V.~Panneershelvam, M.~Lanctot \emph{et~al.},
  ``Mastering the game of go with deep neural networks and tree search,''
  \emph{Nature}, vol. 529, no. 7587, p. 484, 2016.

\bibitem{mnih2013playing}
V.~Mnih, K.~Kavukcuoglu, D.~Silver, A.~Graves, I.~Antonoglou, D.~Wierstra, and
  M.~Riedmiller, ``Playing atari with deep reinforcement learning,''
  \emph{arXiv preprint arXiv:1312.5602}, 2013.

\bibitem{Sutton1998Reinforcement}
R.~Sutton and A.~Barto, \emph{Reinforcement Learning:An Introduction}, 1998.

\bibitem{Wilson2000Multiresolution}
R.~Wilson, ``Multiresolution gaussian mixture models: Theory and
  applications,'' in \emph{IEEE International Conference on Pattern
  Recognition}.\hskip 1em plus 0.5em minus 0.4em\relax Citeseer, 2000.

\bibitem{Liang2007A}
Y.~Liang, L.~Zhang, M.~Li, and B.~Han, ``A filled function method for global
  optimization,'' \emph{Journal of Computational and Applied Mathematics}, vol.
  205, no.~1, pp. 16--31, 2007.

\bibitem{lecun1998gradient}
Y.~LeCun, L.~Bottou, Y.~Bengio, P.~Haffner \emph{et~al.}, ``Gradient-based
  learning applied to document recognition,'' \emph{Proceedings of the IEEE},
  vol.~86, no.~11, pp. 2278--2324, 1998.

\bibitem{Lawler1966Branch}
E.~L. Lawler and D.~E. Wood, ``Branch-and-bound methods: A survey,''
  \emph{Operations Research}, vol.~14, no.~4, pp. 699--719, 1966.

\end{thebibliography}

\section*{Appendix A}\label{proof_AGD}

The proof of Theorem~\ref{dNetConv} can be found below.
\begin{proof}
In the AGD, $d_k=-R_{k-1} \widetilde{H}_{k-1} g_k$ where \[R_{k-1}=I-\frac{s_{k-1} \widehat{y}_{k-1}^\intercal}{s_{k-1}^\intercal \widetilde{y}_{k-1}},\]$ \widetilde{y}_{k-1}={w_k^3{g_k}-w_k^4{g_{k-1}}},\widehat{y}_{k-1}={w_k^1{g_k}-w_k^2{g_{k-1}}}$, and $\widetilde{H}_{k-1}={\beta}_k{H_{k-1}}+(1-{\beta}_k)I$.
With exact linear search, we have $g_k^\intercal s_{k-1} = 0$, therefore
\begin{eqnarray*}
-g_k^\intercal d_k&=&g_k^\intercal({\beta}_k{H_{k-1}}+(1-{\beta}_k)I)g_k\\
&-&g_k^\intercal \frac{s_{k-1} \widehat{y}_{k-1}^\intercal}{s_{k-1}^\intercal \widetilde{y}_{k-1}}\widetilde{H}_{k-1}g_k\\
&=&g_k^\intercal({\beta}_k{H_{k-1}}+(1-{\beta}_k)I)g_k
\end{eqnarray*}
It is clear that $-g_k^\intercal d_k>0$ if $H_{k-1} \succ 0$, otherwise a $\beta_k > 0$ can be chosen to make $({\beta}_k{H_{k-1}}+(1-{\beta}_k)I)$ diagonally dominant, which means $g_k^\intercal({\beta}_k{H_{k-1}}+(1-{\beta}_k)I)g_k>0$. Therefore, we can always make sure $g_k^\intercal d_k < 0$, i.e. $d_k$ is a descent direction, and
\[f(x_1) \geq f(x_2) \geq \cdots \geq f(x_k)\]Since $f(x)$ is bounded, there exists $f(x^*)$ such that $\lim_{k\rightarrow \infty }f(x_k) = f(x^\star)$.
\end{proof}

\section*{Appendix B}\label{proof_score}

This section gives details of the proofs for the theorems in Section~\ref{l2local}. The proof to Theorem~\ref{reward} is shown below.
\begin{proof}
According to assumption (2), $g(t)$ is convex in $[-\delta,\delta]$. As $g'(0)= f'(x)\big|_{x=x_0}=0,g''(0)>0$, then $g'(t)>0$ in $(0,\delta]$. Therefore, $g(t)$ is monotonically increasing in $[0,\delta]$, and monotonically decreasing in $[T-\delta, T]$. Since $f(x)\in C^2(\mathbb{R}^n)$, then $g(t)\in C^2[0,T]$. This implies that $g'(t)$ is continuous in $[0,T]$.

Since $g'(\delta)>0$ and $g'(T-\delta)<0$, then there exists a $\xi$ such that $g'(\xi)=0$. Further, $\xi$ is unique since it is assumed that there is no other local minima between $x_0$ and $x_1$. Then we have $g'(t)>0$ in $[0,\xi)$, and $g'(t)<0$ in $(\xi,T]$.
\end{proof}

To prove Theorem~\ref{existence}, we first prove Lemma~\ref{lemma1}.

\begin{lemma}\label{lemma1}
Suppose $F(a)$ to be the function defined in Alg.~\ref{alg:distance determination}. For fixed $a$ and $N$, we have:
\begin{equation}
  \lim_{\alpha\rightarrow 0} F'(a) =\frac{1}{a}\int_{t_1}^{t_N} g'(t)dt
\end{equation}where $g(t)=f(x_0+td)$.
\end{lemma}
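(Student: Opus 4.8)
The plan is to collapse $F(a)$ to a one-dimensional object by exploiting the fact that every iterate generated in line~\ref{a1} of Alg.~\ref{alg:distance determination} stays on the ray $x_0 + t\,d$. First I would write $x_j = x_0 + t_j d$ and substitute the update $x_j = x_{j-1} + \alpha 2a(x_{j-1}-x_0)$ to obtain the scalar recursion $t_j = (1+2\alpha a)\,t_{j-1}$ with $t_1 = \delta_0$, hence the closed form $t_j = \delta_0(1+2\alpha a)^{j-1}$. Setting $g(t) = f(x_0+td)$, so that $g'(t) = \nabla f(x_0+td)^\intercal d$, the objective from line~\ref{a2} becomes $F(a) = \sum_{j=2}^N g(t_j)/(j-1)$, a function of $a$ only through the exponents $t_j(a)$.

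Next I would differentiate term by term. Since $\partial t_j/\partial a = 2\alpha\,\delta_0(j-1)(1+2\alpha a)^{j-2}$, the weight $1/(j-1)$ cancels exactly, giving $\frac{d}{da}\!\left[g(t_j)/(j-1)\right] = 2\alpha\,\delta_0(1+2\alpha a)^{j-2}\,g'(t_j)$. Comparing this with the spacing $\Delta t_j := t_j - t_{j-1} = 2\alpha a\,\delta_0(1+2\alpha a)^{j-2}$ shows that $2\alpha\delta_0(1+2\alpha a)^{j-2} = \Delta t_j/a$, so the sum collapses into the exact identity
\begin{equation}\label{fa}
F'(a) = \frac{1}{a}\sum_{j=2}^N g'(t_j)\,\Delta t_j .
\end{equation}
This is the identity later invoked as $Q_i = aF'(a)$, and it holds for every $\alpha$ with no limit taken yet; establishing it is the computational heart of the lemma.

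Finally I would recognize the right-hand side of Eq.~\ref{fa} as a right-endpoint Riemann sum of $g'$ over the partition $t_1 < t_2 < \cdots < t_N$ of $[t_1,t_N]$ and let $\alpha\to0$. Writing the gap as $\sum_{j=2}^N\int_{t_{j-1}}^{t_j}\!\left[g'(t_j)-g'(t)\right]dt$ and bounding each integrand by the modulus of continuity $\omega_{g'}$ of $g'$, I would show $\bigl|\sum_{j} g'(t_j)\Delta t_j - \int_{t_1}^{t_N} g'(t)\,dt\bigr| \le \omega_{g'}(\text{mesh})\,(t_N-t_1)$; since $f\in C^2$ makes $g'$ uniformly continuous and the mesh $\max_j \Delta t_j \to 0$ as $\alpha\to0$, this gap vanishes, yielding $\lim_{\alpha\to0}F'(a) = \frac1a\int_{t_1}^{t_N}g'(t)\,dt$.

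The main obstacle is the bookkeeping that produces the clean cancellation of $1/(j-1)$ against $\partial t_j/\partial a$ and the emergence of the factor $1/a$ in Eq.~\ref{fa}; once that identity is secured, the limit is a routine Riemann-sum argument. A secondary subtlety worth flagging is that, with $a$ and $N$ fixed, the partition endpoints $t_1,t_N$ themselves depend on $\alpha$, so I deliberately prefer the direct modulus-of-continuity estimate above to a black-box Riemann-sum convergence theorem, which presumes a fixed interval of integration.
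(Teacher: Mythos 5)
Your proposal is correct and follows essentially the same route as the paper's proof: reduce to the scalar recursion $t_j=(1+2\alpha a)^{j-1}\delta_0$, differentiate to get the exact identity $F'(a)=\frac{1}{a}\sum_{j=2}^N g'(t_j)(t_j-t_{j-1})$, and pass to the limit as a Riemann sum of the continuous function $g'$. Your explicit modulus-of-continuity bound and your remark that the endpoints $t_1,t_N$ themselves depend on $\alpha$ are slightly more careful than the paper's one-line appeal to Riemann integrability, but the argument is the same.
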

\begin{proof}
Since $\{x_1, \cdots, x_N\}$ are all on the line  $x_0+td$ with $t > 0$. Therefore, each $x_i, 1\leq i\leq N$ can be written as
\[x_i = x_0 + t_i d\]where $t_1 = \delta_0 >0$ and $t_1 < t_2 \cdots < t_N$.
Further, we have
\begin{eqnarray*}
 x_i - x_{i-1} = 2a\alpha t_{i-1} d  \Leftrightarrow (t_i - t_{i-1}) \cdot d = 2a\alpha t_{i-1}  d
\end{eqnarray*}or equivalently,
\begin{eqnarray}
t_i - t_{i-1} = 2a\alpha t_{i-1} \Longrightarrow t_i = (1+2a\alpha)^{i-1} t_1
\end{eqnarray}
We thus have:
\begin{align}
F'(a)	&=	\sum_{i=1}^N \nabla_a f(x_i)/(i-1)	\nonumber\\
&=	\sum_{i=2}^N{\nabla_a f((1+2a\alpha)^{i-1} t_1d)}/(i-1)\nonumber \\
&= \sum_{i=2}^N  \nabla f((1+2a\alpha)^{i-1} t_1d)^\intercal \cdot 2\alpha t_{i-1}d \nonumber\\
&=\sum_{i=2}^N  \nabla f(x_{i-1}+a \cdot 2\alpha t_{i-1} d)^\intercal \cdot  \frac{1}{a}(t_i-t_{i-1})d \nonumber\\
&=\frac{1}{a}\sum_{i=2}^N  \nabla f(x_i)^\intercal\cdot d\cdot(t_i-t_{i-1})\label{fa}\\
&=\frac{1}{a}\sum_{i=2}^N  g'(t_i)(t_i-t_{i-1})\nonumber
\end{align}
 Since $g'(t)$ is continuous in $[t_1, t_N]$, it is Riemann integrable. Thus, for a fixed $N$, we have
 \begin{eqnarray*}
  \lim_{\alpha\rightarrow 0}\sum_{i=2}^N  g'(t_i)(t_i-t_{i-1})=\int_{t_1}^{t_N} g'(t)dt
\end{eqnarray*}This finishes the proof.
\end{proof}
In the sequel, we define \[G(a) = t_N - t_1 = ((1+2\alpha a)^{N-1} -1)t_1\]where $a \in [0, \infty)$. Here $G(a)$ is just the distance between $x_N$ and $x_1$ along $d$. Obviously, $G(a)$ is a polynomial function of $a$, and it is monotonically increasing.

Based on Lemma~\ref{lemma1}, Lemma~\ref{lth3} can be established.
\begin{lemma}
Suppose that $x' =x_0+Td$ is a point such that $f(x_0)\geq f(x')$, and there are no other local minimizer within ${\cal B}_0$. If $\alpha$ is sufficiently small, then there exists an $a^*$ such that $F'(a^*)=0$.\label{lth3}
\end{lemma}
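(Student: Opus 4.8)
The plan is to exhibit $a^\ast$ as a positive zero of the continuous map $a\mapsto F'(a)$ by the intermediate value theorem, using the closed form of $F'(a)$ already derived in the proof of Lemma~\ref{lemma1} together with the shape of $g(t)=f(x_0+td)$ established in Theorem~\ref{reward}. Recall from Eq.~\ref{fa} that
\[ F'(a)=\frac{1}{a}\sum_{i=2}^{N} g'(t_i)(t_i-t_{i-1}),\qquad t_i=(1+2a\alpha)^{i-1}\delta_0, \]
so $F'$ is continuous on $(0,\infty)$ and, since $a>0$, has the same sign as the right-endpoint Riemann sum $S(a):=\sum_{i=2}^N g'(t_i)(t_i-t_{i-1})$, which by Lemma~\ref{lemma1} approximates $\int_{\delta_0}^{t_N}g'(t)\,dt=g(t_N)-g(\delta_0)$.

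First I would pin down the geometry of $g$. By Theorem~\ref{reward}, $g$ increases on $[0,\xi)$ and strictly decreases on $(\xi,T]$, while the hypothesis gives $g(T)=f(x')\le f(x_0)=g(0)$. Choosing $\delta_0$ small enough that $t_1=\delta_0<\xi$ yields $g(\delta_0)>g(0)\ge g(T)$ and $g(\delta_0)<g(\xi)$, so the intermediate value theorem applied to $g$ on $[\xi,T]$ produces a unique $t^\ast\in(\xi,T)$ with $g(t^\ast)=g(\delta_0)$. Consequently $g(t_N)-g(\delta_0)>0$ for $t_N\in(\delta_0,t^\ast)$ and $g(t_N)-g(\delta_0)<0$ for $t_N\in(t^\ast,T)$.

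Next I would establish the two sign conditions on $S$. For the positive end, as $a\to 0^+$ all nodes $t_2,\dots,t_N$ collapse toward $\delta_0<\xi$, where $g'>0$; every summand is then positive, so $S(a)>0$ and hence $F'(a)>0$. For the negative end, I would fix a target $t^{\ast\ast}\in(t^\ast,T)$ so that $g(t^{\ast\ast})-g(\delta_0)<0$ is bounded away from zero, and pick $a=a_1$ with $t_N(a_1)=t^{\ast\ast}$, which is possible because $t_N$ grows continuously from $\delta_0$ to $\infty$ as $a$ increases. Since $S(a_1)$ is a Riemann sum for $\int_{\delta_0}^{t^{\ast\ast}}g'=g(t^{\ast\ast})-g(\delta_0)<0$, the convergence in Lemma~\ref{lemma1} forces $S(a_1)<0$ once the discretization is fine enough. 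Applying the intermediate value theorem to the continuous map $a\mapsto S(a)$ between these two values then gives $a^\ast$ with $S(a^\ast)=0$, i.e. $F'(a^\ast)=0$.

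The delicate step is the negative-end estimate, and it is where ``$\alpha$ sufficiently small'' must be read with care. The partition $\{t_i\}$ depends on $a$ and $\alpha$ jointly, and its mesh $\max_i(t_i-t_{i-1})=2a\alpha\,t_{N-1}$ does \emph{not} shrink if one merely sends $\alpha\to0$ while holding $t_N=t^{\ast\ast}$ fixed, since that forces $a\sim 1/\alpha$. To legitimize the Riemann-sum approximation I would instead fix the interval $[\delta_0,t^{\ast\ast}]$ and its geometric partition $1+2a\alpha=(t^{\ast\ast}/\delta_0)^{1/(N-1)}$, letting $N\to\infty$ so the mesh tends to $0$; then uniform continuity of $g'$ on the compact set $[\delta_0,T]$ gives $S(a_1)\to g(t^{\ast\ast})-g(\delta_0)<0$. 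Making this coupling between $\alpha$, $a$ and $N$ precise, thereby upgrading the heuristic $F'(a)\approx\frac{1}{a}\bigl(g(t_N)-g(\delta_0)\bigr)$ to a rigorous strict inequality, is the main obstacle; the remainder is a direct appeal to the intermediate value theorem.
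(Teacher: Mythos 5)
Your proposal follows the same strategy as the paper's proof: obtain $a^\ast$ by the intermediate value theorem applied to $a\mapsto F'(a)$, using the closed form $F'(a)=\frac{1}{a}\sum_{i=2}^N g'(t_i)(t_i-t_{i-1})$ from Lemma~\ref{lemma1} and the unimodal shape of $g$ from Theorem~\ref{reward} to produce one value of $a$ where $F'>0$ (nodes confined to the increasing part of $g$) and one where $F'<0$ (nodes reaching past the crossing point $t^\ast$ so that $g(t_N)-g(t_1)<0$). The paper realizes both sign conditions by solving $G(a_1)=\xi-t_1$ and $G(a_2)=T-t_1$ and then invoking Lemma~\ref{lemma1} to replace the sum by the integral; your small-$a$ argument for the positive end is a clean simplification that avoids the limit entirely, since for $a$ small all $t_i$ stay below $\xi$ and every summand is positive outright.

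Your flagged ``delicate step'' is in fact a genuine soft spot in the paper's own argument, and your diagnosis is exactly right. Lemma~\ref{lemma1} is stated for fixed $a$ and $N$ with $\alpha\to 0$, but in that regime $t_N\to t_1$ and the integral degenerates; to keep $t_N$ pinned at $\xi$ or $T$ one must take $a\sim 1/\alpha$, in which case the mesh $2a\alpha t_{N-1}$ of the geometric partition does not shrink and the Riemann-sum approximation is not justified by sending $\alpha\to 0$ alone. The paper glosses over this by writing ``as $\alpha$ is sufficiently small, $|F'(a_i)-\frac{1}{a_i}\int_{t_1}^{t_N}g'(t)\,dt|<\varepsilon$'' without acknowledging the coupling. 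Your repair --- fix the interval $[\delta_0,t^{\ast\ast}]$, set $1+2a\alpha=(t^{\ast\ast}/\delta_0)^{1/(N-1)}$, and let $N\to\infty$ so that uniform continuity of $g'$ on the compact interval drives the sum to the integral --- is the correct way to make the negative-end estimate rigorous, and is a strict improvement on what the paper actually writes. So: same route, but your version closes a gap the paper leaves open.
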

\begin{proof}
Since $f(x') \leq f(x_0)$, and $g(t) = f(x_0 + td)$ is smooth, there exists a $\xi$ such that $\xi = \argmax_{t \in [t_1, T]} g(t)$.

Let's consider two cases. First, let $D_1 = \xi-t_1$, then there is an $a_1$ s.t. $G(a_1) = D_1$ according to the intermediate value theorem. As $\alpha$ is sufficiently small, we have: $\forall \varepsilon>0,\ \exists\ \alpha$ s.t.
\begin{eqnarray*}
\left|F'(a_1)-\frac{1}{a_1}\int_{t_1}^{t_N} g'(t)dt\right |<\varepsilon
\end{eqnarray*}
Note that we can choose $\delta_0$ s.t.\begin{eqnarray*}
\frac{1}{a_1}\int_{t_1}^{t_N} g'(t)dt=\frac{1}{a_1}(g(\xi)-g(t_1))&=&\\
\frac{1}{a_1}(f(x_0+\xi d)-f(x_0+\delta_0d))&>&0\end{eqnarray*}Let $\epsilon_0= \frac{1}{a_1}\int_{t_1}^{t_N} g'(t)dt$ and $\epsilon = \epsilon_0/2$, then $\exists\ \alpha_0$ s.t. $|F'(a_1)-\epsilon_0|<\epsilon_0/2\ \Rightarrow\ F'(a_1)>\epsilon_0/2>0$.

Similarly, if let $D_2 = T- t_1$, then there is $a_2$ s.t. $G(a_2) = D_2$, as $\alpha$ is sufficiently small, we have: $\forall \varepsilon>0,\ \exists\ \alpha$ s.t.
\begin{eqnarray*}
\left|F'(a_2)-\frac{1}{a_2}\int_{t_1}^{t_N} g'(t)dt\right |<\varepsilon
\end{eqnarray*}
Note that \begin{eqnarray*}
\frac{1}{a_2}\int_{t_1}^{t_N} g'(t)dt=\frac{1}{a_2}(g(T)-g(t_1))&=& \nonumber\\
\frac{1}{a_2} [f(x') - f(x_0 + \delta_0 d)] = \frac{1}{a_2} (f(x') - f(x_1))&<& 0\end{eqnarray*}Let $\epsilon_1= \frac{1}{a_2}\int_{t_1}^{t_N} g'(t)dt$ and $\epsilon = -\epsilon_1/2$, then $\exists\ \alpha_0$ s.t. $|F'(a_2)-\epsilon_1|<-\epsilon_1/2\ \Rightarrow\ F'(a_2)<\epsilon_1/2<0$.

In summary, we have $F'(a_1) > 0$ and $F'(a_2) < 0$, according to the intermediate value theorem, there exists an $a^*$ such that $F'(a^*)=0$.
\end{proof}

If there are $L$ local minimizers $x^1,\cdots,x^{L}$ in ${\cal B}_0$ whose criteria are bigger than $f(x_0)$, and a local minimizer $x'$ with smaller criterion outside ${\cal B}_0$. Denote $f_{\min}=\min_{i=1,...,L}\{f(x^i)\}$ , we have $L+1$ local maximizers $\xi_1<\cdots<\xi_{L+1}$. Since $f(x') < f(x_0)$, we can set $\delta_0$ such that $f_{\min}>f(x_0+\delta_0 d)$. Substituting $\xi_{L+1}$ to $\xi$ in the proof,  we can prove Theorem~\ref{existence}.



\section*{Appendix C}\label{fix_dem}
In the following, we will explain why $P_c>P_r$. In Alg.~\ref{alg:sampling with curiosity}, the main idea is using negative linear combination and adding a noise to make algorithm robustly. Now we will explain the insight of 'negative linear combination'. We first assume that there are two local minimizers. Without loss of generality, suppose that we are at a local minimum $x_0$, and there exists a local minima $x'$ ($f(x') < f(x_0)$). Then $x'$ has a neighborhood region $R_{x'}$, which satisfies $f(x)<f(x_0), \forall x\in R_{x'}$, then $x''$ denotes the center of circumscribed sphere of $R_{x'}$. Then $d^*\triangleq x''-x_0$ is called the central direction in the sequel. Further, we define the ray $ \ell_{d} =   x_0 + td, t > 0$. We have the following Lemma~\ref{sam}.



\begin{lemma}\label{sam}
Given an initial sample of directions and scores $\{(d_1, u_1)\cdots, (d_{N_0}, u_{N_0})\}$ ($N_0\leq n$), using negative linear combination of $d_1,\cdots,d_{N_0}$, it is of higher probability to obtain $d^*$ than that of the random sampling.
\end{lemma}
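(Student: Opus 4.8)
The plan is to recast the comparison as one between the probabilities that a single sampled unit direction lands in the \emph{promising cone}
\[ C = \{\, d : \ell_d \cap R_{x'} \neq \emptyset \,\}, \]
the set of directions whose ray $\ell_d = x_0 + td$ ($t>0$) reaches the better region $R_{x'}$. By Theorem~\ref{reward} and the score definition Eq.~\ref{scores}, a direction $d$ returns a positive score $u_d$ exactly when $\ell_d$ enters $R_{x'}$, i.e. when $d\in C$, whereas a direction whose ray leaves the bounded region $\|x-x_0\|\le M$ is assigned a negative score. Using that $R_{x'}$ sits inside a convex attraction basin (assumptions (2)--(3)), $C$ is a cone about the central direction $d^* = x''-x_0$, so every $d\in C$ has $\langle d, d^*\rangle > 0$. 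Random sampling hits $C$ with probability $P_r = \mu(C)/\mu(\mathbb{S}^{n-1})$, where $\mu$ is the uniform measure on the unit sphere; the goal is to show that the fixed-policy direction lands in $C$ with the strictly larger probability $P_c$, where obtaining $d^*$ is understood as sampling a promising direction, i.e. one lying in $C$.

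First I would collapse the sampling rule of line~\ref{fp2} into a single weighted sum. Since $u_i = -|u_i|$ for $u_i<0$ while the positive-score sum carries a leading minus sign,
\[ \tilde d - \varepsilon \;=\; \sum_{i:\,u_i<0} u_i d_i \;-\; \sum_{i:\,u_i>0} u_i d_i \;=\; -\sum_{i=1}^{N_0} |u_i|\, d_i, \]
so the policy is exactly the negative $|u_i|$-weighted combination of the sampled directions, and $\tilde d$ is this vector perturbed by the Gaussian noise $\varepsilon$.

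The heart of the argument is a first-moment (symmetry) computation for the projection $\langle \tilde d, d^*\rangle$. Because $d_1,\dots,d_{N_0}$ are independent and uniform on the sphere, $\mathbb{E}\big[\sum_i d_i\big]=0$, hence the expected contribution of the out-of-cone directions is the negative of that of the in-cone directions; since in-cone directions have positive projection onto $d^*$, negating the full sum yields a vector whose expected projection onto $d^*$ is strictly positive. When the cone is narrow (small $P_r$) the out-of-cone terms are far more numerous, so this positive projection dominates the few oppositely signed in-cone terms, and $\mathbb{E}[\langle \tilde d, d^*\rangle] > 0$. I would then promote this mean statement to a probability statement by a concentration/continuity argument: the normalized direction $\tilde d/\|\tilde d\|$ clusters around $d^*/\|d^*\|$, so the event $\{\tilde d\in C\}$ carries probability $P_c > P_r$, with the noise $\varepsilon$ (for small $\sigma$) only broadening the spread symmetrically and preserving the inequality.

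The hard part will be making the passage from the expectation to the probability rigorous, because the weights $|u_i|$ are themselves random and depend on the unknown landscape, so $\tilde d$ is a sum of dependent terms rather than of independent ones. I would control this by bounding $|u_i|$ from above and below separately on $C$ and on its complement (using the $C^2$ and local-convexity assumptions (1)--(2) to relate the score magnitude to the gradient along the ray), which decouples the sign pattern from the geometry, and then by invoking the rotational symmetry of the conditional law of $\tilde d$ about the $d^*$-axis given the number of in-cone samples. This reduces $P_c$ to the mass that a distribution symmetric about $d^*$ places on the cone $C$, which exceeds the uniform mass $P_r$. I expect that a fully explicit constant is out of reach, so the honest content of the lemma is the comparison of the mean sampling directions, which is what this plan establishes.
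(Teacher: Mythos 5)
Your route is genuinely different from the paper's (which is a purely geometric, conditional-volume argument: the negatively scored $d_i$ certify that the cones $C^i$ around them cannot contain the central direction, so sampling in the region $B^*$ spanned by $-d_1,\dots,-d_{N_0}$ amounts to restricting the search to $\widetilde{\Omega}=\Omega\setminus\widetilde{C}$, and the hitting probability is inflated by the ratio $V_{\Omega}/V_{\widetilde{\Omega}}>1$). However, your first-moment computation, which is the load-bearing step of your plan, does not go through as stated. Write $S=S_{\mathrm{in}}+S_{\mathrm{out}}$ for the split of $\sum_i d_i$ into in-cone and out-of-cone terms. From $\mathbb{E}[S]=0$ you correctly get $\mathbb{E}[\langle S_{\mathrm{out}},d^*\rangle]=-\mathbb{E}[\langle S_{\mathrm{in}},d^*\rangle]<0$, but then $\mathbb{E}[\langle -S,d^*\rangle]=0$ exactly, independently of how many terms fall in each part: the ``out-of-cone terms are far more numerous'' observation cannot tip an expectation that is identically zero for a random split of i.i.d.\ uniform directions. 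The positive bias you want exists only \emph{after conditioning on the sampled directions having negative scores} (then each $\mathbb{E}[d_i\mid d_i\notin C]$ has strictly negative projection on $d^*$, and negation flips it), which is precisely the conditioning the paper exploits geometrically; your unconditional symmetry argument misses it, and the weights $|u_i|$, being landscape-dependent, are not shown to restore the sign.

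Even granting a corrected positive mean projection, the second half of your plan does not reach the statement of Lemma~\ref{sam}. A law for $\tilde d$ that is rotationally symmetric about the $d^*$-axis with positive mean projection need not place more mass than the uniform measure on the cone $C$ --- that requires an angular stochastic-domination or monotone-likelihood argument you have not supplied, and you concede at the end that only the comparison of means is actually established. The lemma is a probability comparison ($P_c>P_r$), so this is a real gap, not a presentational one. The paper avoids the issue entirely by never computing a mean: it writes $\tilde P_c=P\{x''\in B^*\}\,P\{\tilde d=d^*\mid x''\in B^*\}$ and reduces everything to the volume ratio $V_{\Omega}/V_{\widetilde{\Omega}}$ of the full search region to the region left after deleting the cones excluded by the negative scores. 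If you want to salvage your approach, the two things to fix are (i) condition on $u_i<0$ before taking expectations, and (ii) replace ``symmetric about $d^*$ hence more mass on $C$'' with an explicit comparison of the angular density against the uniform one.
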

\begin{proof}
In the following, we first prove the theorem in case $n=2$. It is then generalized to $n > 2$.

In case $N_0=2$ and $n=2$, suppose at some time step, we have two linearly independent directions $d_1$ and $d_2$ with negative scores. Let $\Omega = \{x:\|x - x_0 \|_2 \leq M\}$ be the confined search space. The search space can then be divided into four regions $B^1, B^2, B^3$ and $B^*$\footnote{Namely, $B^1 = \{d = \alpha_1 d_1 + \alpha_2 d_2, \alpha_1 > 0, \alpha_2 < 0\}$, $B^2 = \{d = \alpha_1 d_1 + \alpha_2 d_2, \alpha_1 < 0, \alpha_2  > 0\}$, $B^3 = \{d = \alpha_1 d_1 + \alpha_2 d_2, \alpha_1 > 0, \alpha_2 > 0\}$.}. Particularly, $B^* = \{d = \alpha_1 d_1 + \alpha_2 d_2, \alpha_1 < 0, \alpha_2 < 0\}$. Suppose that $x'$ has a neighborhood region $R_{x'}$, which satisfies $f(x)<f(x_0), \forall x\in R_{x'}$, and the radius of the circumscribed sphere of $R_{x'}$ is $r_0$. The boundary of the circumscribed sphere and $x_0$ can form a cone $C^*$. By assumption, the lines $\ell_{d_i}, i = 1,2$ has no interaction with $C^*$ (otherwise we have found a direction that will lead to the attraction basin of $x')$, i.e.
\begin{eqnarray*}
 C^* \bigcap \{x | x\in \ell_{d_i}\}=\emptyset, \forall i \in \{1,2\}
\end{eqnarray*}
For each $d_i$, take $x^i$ and $x^*$ such that $x^i = \partial\Omega \bigcap\ell_{d_i}$ and $x^* = \partial\Omega \bigcap \ell_{d^*}$, respectively. Then the boundary of $B(x^i, r_1)$ and $x_0$ form a cone $C^i$, where
\begin{equation}\label{r1} r_1 = \max_{r}\{B(x^*,r)\subset C^*\} \end{equation} Let $\widetilde{C} = \bigcup_{i=1}^2 {C^i}$, we have
\begin{eqnarray*}
\widetilde{C} \bigcap \{x|x\in \ell_{d^*}\}=\emptyset
\end{eqnarray*}Thus, we should avoid looking for directions in the union of $C^i$'s.

Notice that if $B^*\bigcap C^i = \emptyset$ for $i = 1,2$. Denote $\widetilde{\Omega}=\left\{x|x\in \Omega,x\notin\widetilde{C}\right\}$, then $\tilde{P}_c$, the probability of finding $d^*$ in $\Omega$ by the negative linear combination, can be computed as follows:
\begin{eqnarray*}
\tilde{P}_c &=&	P\{\tilde{d} = d^*;x'' \in B^*\}+ P\{\tilde{d} = d^*;x'' \notin B^*\}\\
&=&  P\{x'' \in B^*\}P\{\tilde{d} = d^*|x'' \in B^*\}
\end{eqnarray*}
where $\tilde{d}$ is the direction got by negative linear combination. Notice that $ P\{\tilde{d}= d^* |x'' \in B^*\}=\frac{V_{d^*}}{V_{B^*}}$ where $V_{d^*}, V_{B^*}$ is the measure of $d^*,B^*$, respectively. Denote $\tilde{P}_r$ the probability of finding $d^*$ in $\Omega$ by random sampling, then $V_{d^*}$ can be represented by $\tilde{P}_r$ and the measure of $\Omega$. That is, $V_{d^*}=\tilde{P}_r \cdot V_{\Omega}$. As $d^*$ does not interact with $\tilde{C}$, thus $x''\notin \tilde{C}$. Then we have:
\begin{eqnarray*}
  \tilde{P}_c 	=  \frac{V_{B^*}}{V_{\widetilde{\Omega}}} \cdot \frac{\tilde{P}_r \cdot V_{\Omega}}{V_{B^*}} =  \frac{V_{\Omega}}{V_{\widetilde{\Omega}}}\cdot \tilde{P}_r>\tilde{P}_r
\end{eqnarray*}
where $V_{\Omega},V_{\widetilde{\Omega}}$ is the measure of $\Omega$ and $\widetilde{\Omega}$, respectively. The last inequality holds because $\widetilde{\Omega} \subset \Omega$.

If $B^*\bigcap C^i \neq \emptyset$ for $i = 1,2$, then $B^i$ is covered by $C^i$. Since the region covered by $C^i\ (i=1,2)$ in $B^3$ has a larger measure than $B^*$, $B^*$ is thus the best region for sampling.

In case $n > 2$, we have $n$ directions with negative scores $d_1,\cdots,d_n$. If set $d_2$ as the subspace $S=\{d = \sum_{i=2}^{n}\alpha_id_i,\ \alpha_i>0\}$, since $S$ has a zero measure in $\mathbb{R}^n$, the proof degenerates into the $n=2$ case.
\end{proof}

Furthermore, we will present why $P_c>P_r$:

We illustrate by using $C^2$ in Fig.~\ref{cone} in $n=2$.  In Fig.~\ref{cone}, $d_2$ is a direction with negative score. If we want to create $\tilde{d}$ which is a promising direction, then $d^*$ must be between $d_{\text{low}}$ and $d_{\text{up}}$ as shown in Fig.~\ref{cone2}.

To define $d_{\text{low}}$ and $d_{\text{up}}$, let $C_{d}$ is the cone made by the boundary of $B(x_d,r_1)$ and $x_0$ where $x_d=\partial\Omega\bigcap \ell_d $ for any $t > 0$ and $d$, and $r_1$ is defined in Eq.~\ref{r1}. We further define $D_{\tilde{d}} = \{d|\tilde{d}\in C_{d} \text{\  and\ } C_{d}\bigcap \ell_{d_2}=\emptyset\}$ and $R=\{l_d|d \in D_{\tilde{d}} \}$. $d_{\text{low}}$ and $d_{\text{up}}$ are considered as the ray from $x_0$ to the boundary of $R$. Similarly to the definition of $x_d$, we define $x_{\tilde{d}} = \partial\Omega\bigcap \ell_{\tilde{d}} $, $x_{d_{\text{up}}} = \partial\Omega\bigcap \ell_{{{d}_{\text{up}}}} $ and $x_{d_{\text{low}}} = \partial\Omega\bigcap \ell_{{{d}_{\text{low}}}}$.

If the distance between $x^2$ and $x_{\tilde{d}}$ is larger, then the distance between $x_{d_{\text{up}}}$ and $x_{d_{\text{low}}}$ must be larger as shown in Fig.~\ref{cone2}. This implies the probability that $\tilde{d}$ is promising is higher. When the distance between $x^2$ and $x_{\tilde{d}}$ is larger than $r_1$, i.e. $\tilde{d}\notin C^2$, the probability is the maximum since there is no $d$ such that $C_{d} \bigcap \ell_{d_2} \neq \emptyset $ and $\tilde{d} \in C_d$. Therefore, it is the best to use the opposite direction of $d_2$ since the point by interacting $-d_2$ and  $\Omega $ is the furthest to $x_2$.

Similarly for $C^1$, the best direction should be $-d_1$. Taking both $d_1$ and $d_2$ into consideration, a direction is promising only if its interaction point with $\Omega$ is the furthest to both $x^1$ and $x^2$. It is thus the best to sample a direction in the region spanned by $-d_1$ and $-d_2$, i.e. $B^*$.

For $n>2$, we have $n$ directions with negative scores. Given the $n$ directions, we can construct a spanned space $B = \{d = \sum_{i=1}^N \alpha_i d_i \}$. Depending on the signs of $\alpha_i$'s, we have $2^N$ sub-regions $B^i, i = 1, \cdots, 2^N$. We take $B^*$ be the region with all negative $\alpha_i$'s.

Similar to the analysis in $n=2$, for each $d_i$, the point $\Omega \bigcap \ell_{-d_i}$ is the furthest to $x^i$.  A direction is promising only if its interaction point with $\Omega$ is the furthest to all $x^i$'s. Therefore, $B^*$ is the best region for sampling among the $2^N$ regions.

A direction is sampled with equal probability in $\Omega$ in random sampling. On the contrary, using negative linear combination is sampling in $B^*$. Therefore, we have $P_c > P_r$.

If $f(x)$ has two local minima, we have explained $P_c>P_r$. In case $f(x)$ has 3 or more local minimizers, the sampling procedure can be done as follows. Assuming we have sampled $N$ directions, $\{d_i\}_{i=1}^{N}$, from which at least one local minimizer $x_{\text{last}}$ cannot be reached. It is not wise to sample within the cones induced by local minimizers we have visited. Instead, the negative rewards associated with these directions should be used as the linear combination for sampling directions for $x_{\text{last}}$. Therefore, this combination is guaranteed to be more efficient to sample promising directions for $x_{\text{last}}$ than random sampling.


\begin{figure}
\includegraphics[width = 1\columnwidth]{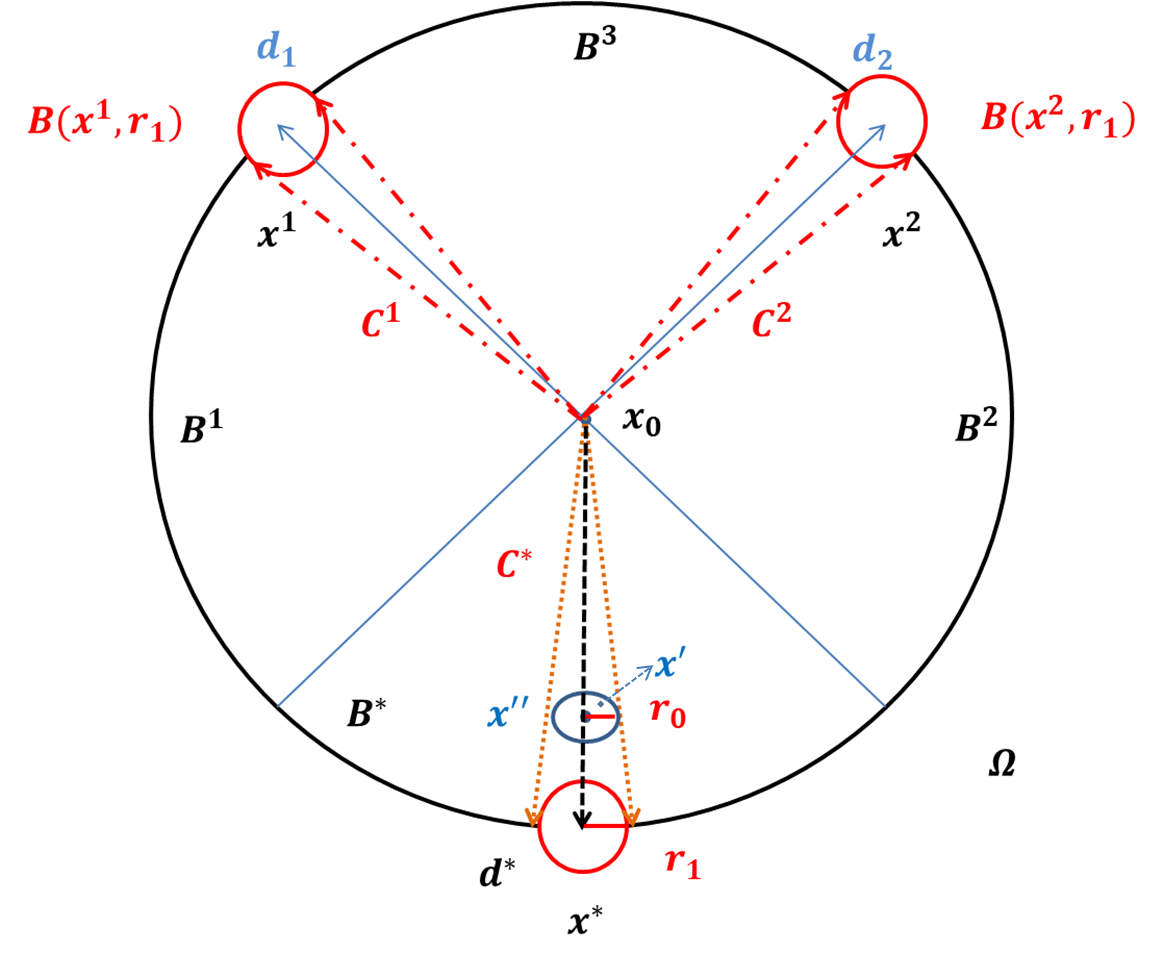}\centering
\caption{Illustration of Lemma~\ref{sam} in 2-D case. In the figure, $d^*$ is to be found in $\Omega$. $r_0$ is the radius of the circumscribed sphere of the attraction basin of $x'$. $x_0,B(x'',r_0)$ form a cone $C^*$. For each $d_i$, take $x^i \in\partial\Omega \bigcap d_i$, the boundary of $B(x^i, r_1)$ and $x_0$ forms a cone $C^i$. Then ${C^i} \bigcap \{x|x=x_0+td^*,x\in\Omega, t>0\}=\emptyset, \forall i$. It is clear that sampling a direction in $B^*$ is the best choice.}\label{cone}
\end{figure}
\begin{figure*}[htbp]
\centering
\subfigure[]{\includegraphics[width = 0.75\columnwidth]{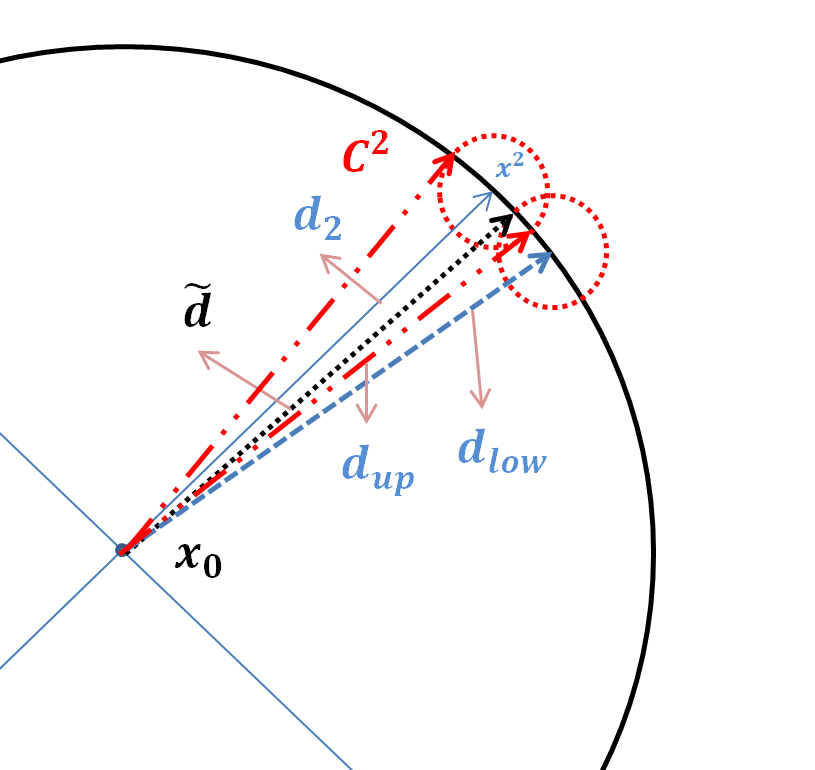}}
\subfigure[]{\includegraphics[width = 0.73\columnwidth]{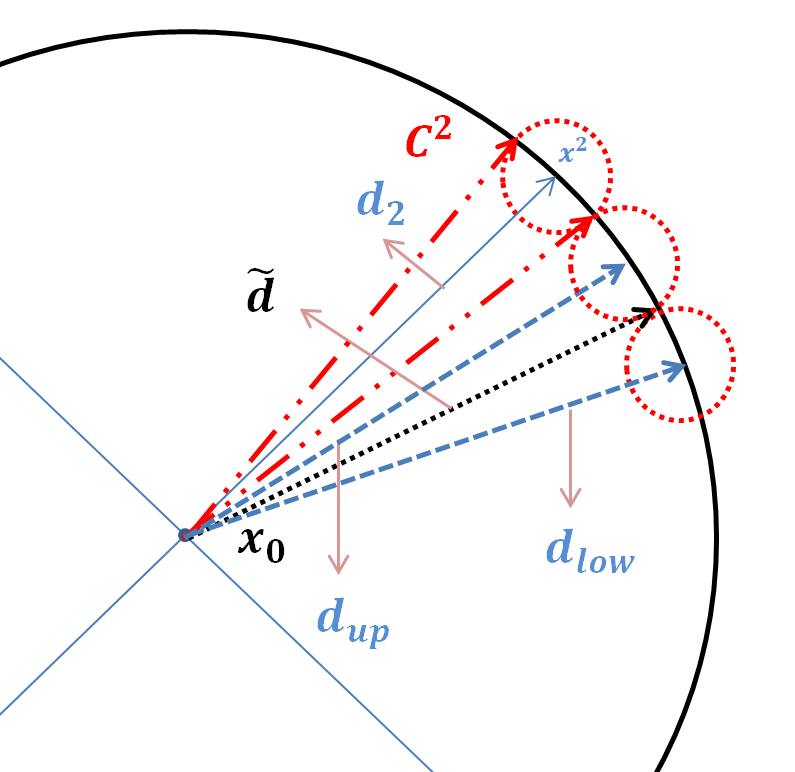}}
\caption{Demonstration of promising direction and optimal direction. (a) shows when $x^2$ and $x_{\tilde{d}}$ is close to each other, $d_{\text{up}}$ and $d_{\text{low}}$ are close too. (b) shows when the distance between $x^2$ and $x_{\tilde{d}}$ is bigger than $r_1$, the distance between $x_{d_{\text{up}}}$ and $x_{d_{\text{low}}} $ reaches the maximum. }\label{cone2}
\end{figure*}

\section*{Appendix D}

To train (test) the learned policy, the Gaussian mixture functions are used (cf. Eq.~\ref{mix_gauss}). And we use $\Sigma_1=diag\{1,1\}$, $\Sigma_2=diag\{1,1\}$, $\mu_1=[0,0]^\intercal$; $\mu_2=[5,5]^\intercal.$ for 2-D problem. When testing, $\Sigma_1=diag\{1,1,1,8,8\}$, $\Sigma_2=diag\{1,1,1,8,8\}$, $\mu_1=[0,0,0,0,0]^\intercal$, $\mu_2=[-5,-5,-5,-5,-5]^\intercal$ for 5-D problem. When training, the following settings with different means and covariances, are applied in Table~\ref{rl_sample}.

\begin{itemize}
\item order 1-4, problem is in 2-D, $\Sigma_1=diag\{1,8\}$, $\Sigma_2=diag\{1,3\}$, $\mu_1=[0,0]^\intercal$; $\mu_2=[7,7]^\intercal,\ [5,7]^\intercal,\ [3,7]^\intercal,\ [4,7]^\intercal$ respectively;
\item order 5, problem is in 5-D, $\Sigma_1=diag\{1,1,1,8,8\}$, $\Sigma_2=diag\{1,1,1,8,8\}$, $\mu_1=[0,0,0,0,0]^\intercal$, $\mu_2=[5,5,5,5,5]^\intercal$;
\item order 6, problem is in 5-D, $\Sigma_1=diag\{1,1,1,8,8\}$, $\Sigma_2=diag\{1,1,1,8,8\}$, $\mu_1=[0,0,0,0,0]^\intercal$, $\mu_2=[4,4,5,5,5]^\intercal$;
\item order 7, problem is in 5-D, $\Sigma_1=diag\{1,1,1,8,8\}$, $\Sigma_2=diag\{1,1,1,8,8\}$, $\mu_1=[0,0,0,0,0]^\intercal$, $\mu_2=[3,3,5,5,5]^\intercal$;
\item order 8, problem is in 5-D, $\Sigma_1=diag\{1,1,1,1,1\}$, $\Sigma_2=diag\{1,1,1,8,8\}$, $\mu_1=[0,0,0,0,0]^\intercal$, $\mu_2=[5,5,5,5,5]^\intercal$;
\item order 9, problem is in 5-D, $\Sigma_1=diag\{1,1,1,1,1\}$, $\Sigma_2=diag\{1,1,1,8,8\}$, $\mu_1=[0,0,0,0,0]^\intercal$, $\mu_2=[3,3,5,5,5]^\intercal$;
\end{itemize}

\end{document}